\newtheorem{theorem}{Theorem}[section]
\newtheorem{lemma}{Lemma}[section]
\newtheorem{assumption}{Assumption}[section]
\title{Domain Adaptive and Fine-grained Anomaly Detection for Single-cell Sequencing Data and Beyond}
\author{
Kaichen Xu$^1$\thanks{Equal contribution.}
\and
Yueyang Ding$^1$$^{\ast}$\and
Suyang Hou$^2$\and
Weiqiang Zhan$^1$\and
Nisang Chen$^1$ \and
Jun Wang$^3$ \and 
Xiaobo Sun$^1$\thanks{Corresponding author.}
\affiliations
$^1$School of Statistics and Mathematics, Zhongnan University of Economics and Law \\
$^2$School of Information Engineering, Zhongnan University of Economics and Law \\
$^3$iWudao Tech \\
\emails
\{kaichenxu, yueyangding, suyang, weiqiangzhan, nisangchen\}@stu.zuel.edu.cn, xsun28@gmail.com, jwang@iwudao.tech
}
\begin{document}

\maketitle
\begin{abstract}
Fined-grained anomalous cell detection from affected tissues is critical for clinical diagnosis and pathological research. Single-cell sequencing data provide unprecedented opportunities for this task. However, current anomaly detection methods struggle to handle domain shifts prevalent in multi-sample and multi-domain single-cell sequencing data, leading to suboptimal performance. Moreover, these methods fall short of distinguishing anomalous cells into pathologically distinct subtypes. In response, we propose ACSleuth, a novel, reconstruction deviation-guided generative framework that integrates the detection, domain adaptation, and fine-grained annotating of anomalous cells into a methodologically cohesive workflow. Notably, we present the first theoretical analysis of using reconstruction deviations output by generative models for anomaly detection in lieu of domain shifts. This analysis informs us to develop a novel and superior maximum mean discrepancy-based anomaly scorer in ACSleuth. Extensive benchmarks over various single-cell data and other types of tabular data demonstrate ACSleuth's superiority over the state-of-the-art methods in identifying and subtyping anomalies in multi-sample and multi-domain contexts. Our code is available at \href{https://github.com/Catchxu/ACsleuth}{https://github.com/Catchxu/ACsleuth}.
\end{abstract}

\section{Introduction}
The detection and differentiation of anomalous cells (ACs) from affected tissues, which we refer to as Fine-grained Anomalous Cell Detection (FACD), is critical for investigating the pathological heterogeneity of diseases, significantly contributing to the clinical diagnostics, biomedical research, and the development of targeted therapies \cite{Meaning}. Single-cell (SC) sequencing data, referred to as SC data for brevity, such as single-cell RNA-sequencing (scRNA-seq) and single-cell ATAC-sequencing (scATAC-seq) data, provide unprecedented opportunities for FACD analysis from distinct perspectives \cite{scGAD,flow,SampleQC}. For example, a typical scRNA-seq dataset is organized as a tabular matrix $\mathbf{X}\in \mathbb{R}^{N\times G}$, where $\mathbf{X}_{i,j}$ represents the expression read counts of the $j$-th gene in the $i$-th cell. This dataset characterizes gene expression levels within single cells, thus facilitating the FACD.
\begin{figure}[t]
    \centering
    \includegraphics[width=\linewidth]
    {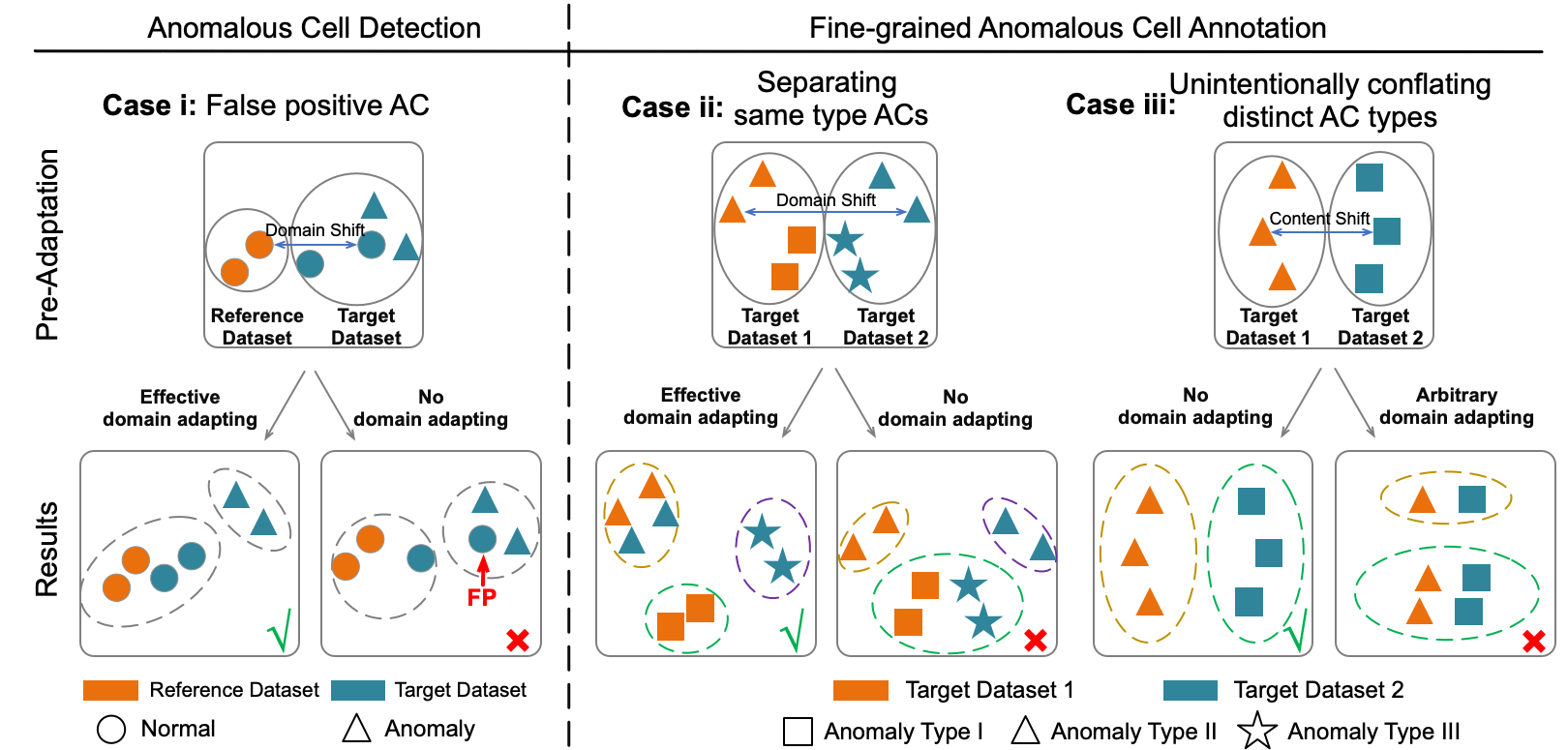}
    \caption{Three error types in multi-sample and multi-domain FACD analysis due to domain shift and sample-specific AC types. Domain shift represents non-biological variations caused by technical differences among samples, while content shift represents true biological variations.}
    \label{fig:illustration}
\end{figure}

The FACD workflow is a two-step process: the detection of anomalous cells as a whole followed by their fine-grained annotating. The detection step aims to identify ACs in target datasets by comparing them to “normality” defined by a reference dataset containing normal cells only. The subsequent fine-grained annotating step involves clustering the detected ACs into different groups, each representing a biologically distinct subtype. However, de novo FACD using SC data presents a significant challenge, especially in multi-sample contexts, primarily due to various types of cross-sample \textbf{Domain Shifts} (DS), i.e., batch effects, stemming from non-biological variations in data types, sequencing technologies, and experimental conditions \cite{batchco,DoS}. These DS often intermingle with content shifts, which originate from biological variations and represent true signals crucial for AC detection, leading to several types of errors in FACD analysis as shown in \Cref{fig:illustration}: \textbf{i}) \textbf{False positive AC}. Normal cells in the target dataset can be erroneously labeled as anomalous due to the DS relative to the reference dataset; \textbf{ii}) \textbf{Separating same type ACs}. ACs of the same type across different target datasets may appear very different due to DS, leading to their misclassification as distinct types; \textbf{iii}) \textbf{Unintentionally conflating distinct AC types}. Efforts to mitigate DS can inadvertently diminish content shifts critical for distinguishing between sample-specific AC types, leading to their merging as one type. 


Anomaly detection (AD) methods generally involve assigning an anomaly score to each target instance, followed by setting a threshold for anomaly determination as per the user's tolerance for false positives \cite{scorethre}. This strategy is followed by approaches tailored for AC detection, including uncertainty-based and generative methods \cite{CAMLU}. The former methods, exemplified by scmap \cite{scmap}, assign anomaly scores based on cluster assignment uncertainty, while the latter methods, such as CAMLU \cite{CAMLU}, attempt to reconstruct cells and output reconstruction deviations as anomaly scores. These methods either are susceptible to DS or fail to provide fine-grained annotation, i.e., labeling all ACs as ``unassigned'' \cite{petegrosso2020machine,scPOT}. To mitigate DS, domain adaptation methods like MNN \cite{MNN} are often used before AC detection, yet with limited effectiveness owing to their sensitivity to ACs \cite{MNNno1}. For fine-grained AC annotation, researchers often resort to clustering methods, 
which are not specifically oriented towards this purpose and thus struggle to capture subtle differences among ACs. Particularly, this separation of fine-grained AC annotation as an independent clustering task can disrupt the methodological coherence of the workflow, leading to the overall underperformance of FACD \cite{chen2020integrating}. MARS \cite{MARS} and scGAD (also known as scPOT) \cite{scGAD,scPOT} represent the only two available methods that integrate AC detection and fine-grained annotation. Yet, MARS, lacking a domain adaptation mechanism, exhibits compromised performance in lieu of DS \cite{scPOT}. scGAD falls short of handling multiple target datasets due to its reliance on MNN for domain adaptation, which is effective primarily in single target dataset scenarios \cite{MNNno3}. 

Inspired by the versatility of generative adversarial networks (GAN) in AD \cite{ALAD} and domain adaptation-related style-transfer \cite{cycleGAN} tasks, we propose ACSleuth, an innovative GAN-based framework designed to integrate AC detection, domain adaptation, and fine-grained annotation into a cohesive workflow, as shown in \Cref{fig:workflow}. For AC detection (\textbf{Phase I}), a specialized GAN module (\textit{module I}) is trained on the reference dataset and then applied to target datasets to discriminate between normal and anomalous cells based on their anomaly scores. \textit{Module I} differs from existing GAN-based AD methods in incorporating a memory block to reduce mode collapse risk \cite{gong2019memorizing} and featuring a novel maximum mean discrepancy (MMD)-based anomaly scorer. This scorer translates cell reconstruction deviations into anomaly scores, whose efficacy in facilitating AD is theoretically proved in \textbf{Theorem 3.3}. In the subsequent \textbf{Phase II}, ACSleuth utilizes a second GAN module (\textit{module II}) to pinpoint ``kin'' normal cell pairs between the reference and target datasets, from which DS inherent in each target dataset can be learned as a matrix for domain adaptation. This approach prevails over existing domain adaptation methods for SC data, e.g., MNN, in preserving the data's original scale and semantic integrity, simultaneously adapting multiple datasets, and being resilient against complications caused by sample-specific AC types. Finally, ACSleuth performs a self-paced deep clustering (\textbf{Phase III}) on fusions of embeddings and reconstruction deviations of domain-adapted ACs, obtaining iteratively enhanced fine-grained annotations. Overall, our main contributions are:
\begin{itemize}[left=0pt]
\setlength{\itemsep}{0.01pt}
    \item We propose an innovative framework for FACD in SC data, which integrates AC detection, domain adaptation, and fine-grained annotation into a cohesive workflow. 
    \item We provide the first theoretical analysis of using reconstruction deviations for AD in lieu of DS (\textbf{Theorem 3.1-3.3}). This analysis also informs us to develop a novel and effective MMD-based anomaly scorer.
    \item  We introduce a novel domain adaptation method in multi-sample and multi-domain contexts. 
    \item  We innovatively fuse reconstruction deviations with domain-adapted cell embeddings as clustering inputs for enhanced AC fine-grained annotations.
     \item  ACSleuth has been rigorously benchmarked in various experimental scenarios regarding DS types, target sample quantity, and sample-specific AC types. ACSleuth consistently outperforms the state-of-the-art (SOTA) methods across all scenarios. Furthermore, ACSleuth is also applicable to other types of tabular data.
\end{itemize}

\section{Related Works}

\subsection{Anomalous Single Cell Detection}
AD methods tailored for SC data, such as scmap, CHETAH \cite{de2019chetah}, and scPred \cite{alquicira2019scpred}, typically involve cell classification or clustering, labeling cells with uncertain assignments as ACs. These methods suffer from high false positive rates, as the assignment uncertainty might stem from similarities among normal cell types rather than the presence of ACs \cite{CAMLU}. To circumvent this issue, CAMLU directly discriminates ACs from normal cells using informative genes selected based on their reconstruction deviations. Moreover, AD methods for general types of tabular data, categorized as generative, one-class classification, and contrastive approaches, are also adaptable to SC data. Generative methods like RCA \cite{RCA} and ALAD \cite{ALAD} detect anomalies based on instance reconstruction deviations, using collaborative autoencoders (AE) and a GAN model, respectively. DeepSVDD \cite{DeepSVDD18}, a one-class classification method, learns a finite-sized ``normality'' hypersphere in a latent space and labels outside instances as anomalies. Contrastive methods ICL \cite{ICL} and NeuTraL \cite{NeuTraL} generate different views for contrastive learning using mutual information-based mappings and learnable neural transformations, respectively, with anomalies identified as those with large contrastive losses. Lastly, a recent study SLAD \cite{SLAD} introduces scale learning to identify anomalies as those unfitted to the scale distribution of inlier representations. A common limitation across these methods is their inability to account for DS between reference and target datasets and to further categorize ACs into fine-grained subtypes.

\subsection{Fine-grained Anomalous Cell Annotation}
As a task following AC detection, fine-grained AC annotation in SC data is often approached as an independent clustering task using general-purpose clustering methods like Leiden \cite{leiden}, EDESC \cite{EDESC}, and DFCN \cite{DFCN}, or methods specifically designed for SC data, such as scTAG \cite{scTAG}, Seurat V5 \cite{Seurat}, and scDEFR \cite{cheng2023unsupervised}. EDESC, DFCN, scTAG, and scDEFR, which are based on DEC \cite{xie2016unsupervised}, adopt a schema of joint optimization of instance representation learning and discriminatively boosted clustering, yet differ in their specific approaches for learning representations and calculating cluster soft assignments. Seurat V5, on the other hand, first builds a KNN graph based on cell-cell similarities and then applies community detection clustering using the Louvain algorithm. A significant challenge with these methods is their dependency on the accuracy of the AC detection in the first place. Treating AC detection and fine-grained annotation as separate tasks can disrupt the methodological coherence of the FACD workflow, potentially compromising its overall performance. To tackle this issue, scGAD and MARS integrate both tasks into a unified process. MARS, for example, learns a latent space where cells positioned near landmarks of distinct anomaly subtypes are assigned corresponding anomaly annotations. While these methods have shown promising results when all ACs are from the same domain, their efficacy can be undermined in the presence of DS. To tackle this, scGAD employs the MNN to adapt DS before fine-grained annotating. However, MNN's limitation to a single reference and target dataset makes scGAD less suitable for multi-sample and multi-domain contexts. 


\begin{figure}[t]
    \centering
    \includegraphics[width=0.85\linewidth]
    {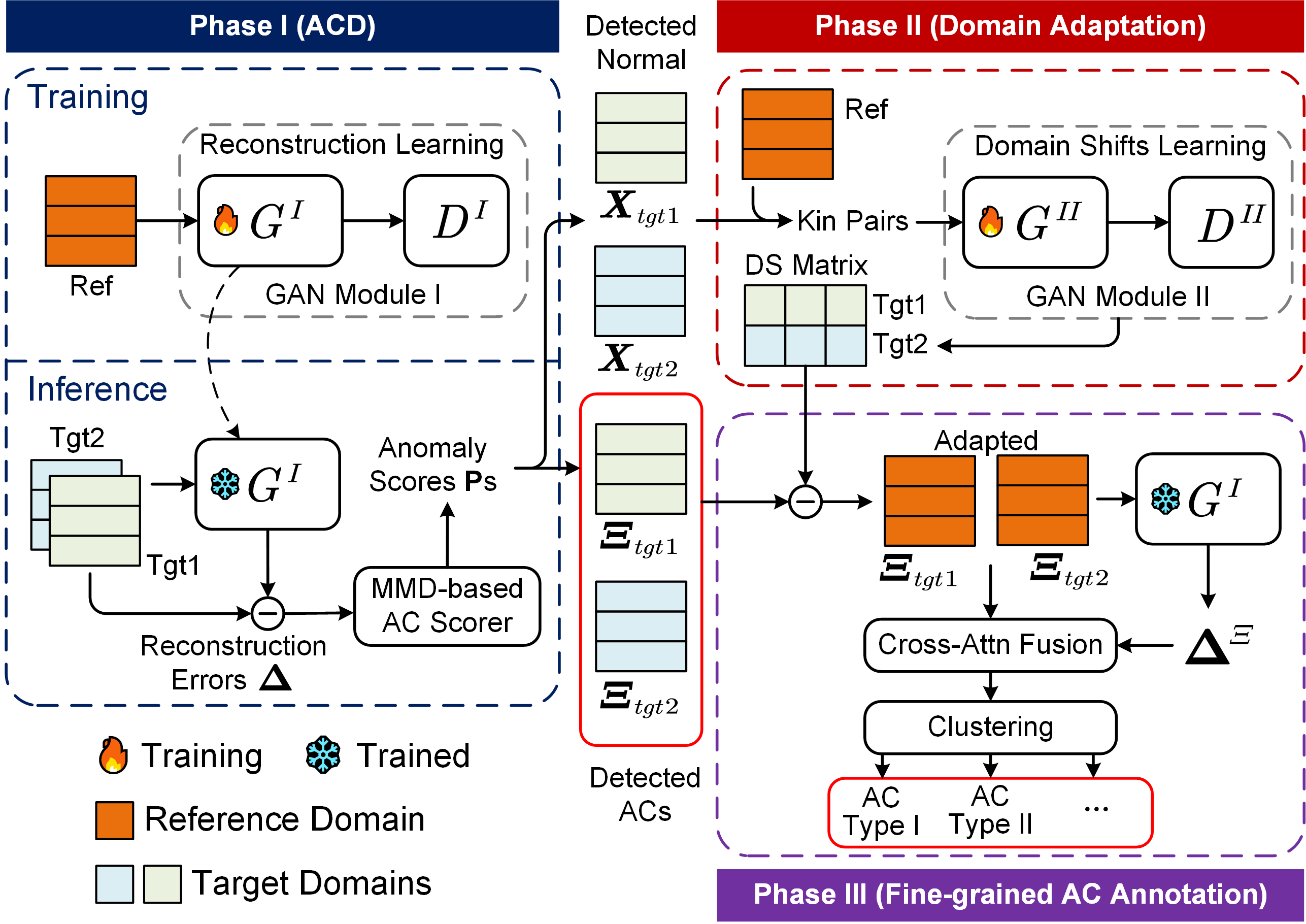}
    \caption{The workflow of ACSleuth.}
    \label{fig:workflow}
\end{figure}
\vspace{-0.2cm}

\section{Methods}
ACSleuth's workflow (\Cref{fig:workflow}) comprises three phases corresponding to AC detection (\textbf{Phase I}), domain adaptation (\textbf{Phase II}), and fine-grained annotation (\textbf{Phase III}).

\subsection{Anomalous Cell Detection (Phase I)}
In Phase I, a GAN model (\textit{module I}) is initially trained to reconstruct normal cells in the reference dataset, and then applied to target datasets. Anomaly scores, derived from reconstruction deviations, are then used to identify ACs. Let $\bm{x}_i \in \mathbb{R}^{N_{gene}}$ denote the gene expression vector of a cell $i\in S^k$, $S^k$ denote the single-cell set in domain $k$, $N_{gene}$ the number of genes. Then we have the following assumption \cite{hornung2016combining}:
\begin{assumption}\label{as:pattern}
    \begin{equation}\label{eq:pattern}
        \bm{x}_i = \bm{x}_i^* + \bm{b}^k + \bm{\epsilon}_i,
    \end{equation}
    where $\bm{x}_i^*$ denotes cell $i$'s biological content, $\bm{b}^k$ the non-biological content specific to domain $k$, $\bm{\epsilon}_i\sim N(0, \bm{\sigma}_i^2)$ a random Gaussian noise. 
\end{assumption}

\paragraph{GAN Model Training.} 
\textit{Module I} comprises a generator $G^I$ and a discriminator $D^I$. The generator itself is composed of three components: an MLP-based encoder ($G^I_E:\mathbb{R}^{N_{gene}} \rightarrow \mathbb{R}^p$), an MLP-based decoder ($G^I_D:\mathbb{R}^p \rightarrow \mathbb{R}^{N_{gene}}$), and a queue-based memory block ($\bm{Q}\in\mathbb{R}^{N_{mem}\times p}$ ), where $p$ is the cell embedding dimension, $N_{mem}$ is the memory block size. The embedding of any cell $i$ can be expressed as:
\begin{equation}
    \bm{z}_i = G^I_E(\bm{x}_i)\in \mathbb{R}^p.
\end{equation}
The memory block provides an attention-based means to reconstruct $\bm{z}_i$ as $\widetilde{\bm{z}}_i$:
\begin{equation}
    \widetilde{\bm{z}}_i = \bm{Q}^T \text{softmax}\left(\frac{\bm{Q}\bm{z}_i}{\tau}\right)\in \mathbb{R}^p,
\end{equation}
where $\tau$ is a temperature hyperparameter. During the training, $\bm{Q}$ is dynamically updated by enqueuing the most recent $\widetilde{\bm{z}}$ and dequeuing the oldest ones to maintain a balance between preserving learned features and adapting to new data, thus reducing the risk of mode collapse. $G^I_D$ reconstructs cell $i$ from $\widetilde{\bm{z}}_i$ as $\widehat{\bm{x}}_i$:
\begin{equation}
    \widehat{\bm{x}}_i = G^I_D(\widetilde{\bm{z}}_i)\in \mathbb{R}^{N_{gene}}.
\end{equation}
The discriminator $D^I$ comprises an encoder, similar to $G^I_E$, and an MLP-based classifier. $D^I$ is trained to distinguish between $\bm{x}$ and $\widehat{\bm{x}}$ . The loss functions for the generator ($\mathcal{L}_{G^I}$) and discriminator ($\mathcal{L}_{D^I}$) are defined as:
\begin{equation}
    \mathcal{L}_{G^I} = \alpha\mathcal{L}_{\text{rec}} + \beta\mathcal{L}_{\text{adv}} 
    = \alpha\mathbb{E}\left[\Vert \bm{x} - \widehat{\bm{x}}\Vert_1\right] - \beta\mathbb{E}\left[D^I(\widehat{\bm{x}})\right], 
\end{equation}
\begin{equation}
\begin{aligned}
    \mathcal{L}_{D^I} = \mathbb{E}\left[D^I(\widehat{\bm{x}})-D^I(\bm{x})\right] + \lambda\mathbb{E}\left[\left({\Vert \nabla_{\widetilde{\bm{x}}} D^I(\widetilde{\bm{x}}) \Vert}_2 - 1\right)^2\right],
\end{aligned}
\end{equation}
where $\widetilde{\bm{x}}=\epsilon\widehat{\bm{x}}+(1-\epsilon){\bm{x}},\ \epsilon\in(0,1)$. Here, $\mathcal{L}_{\text{rec}}$ represents the reconstruction loss, $\mathcal{L}_{\text{adv}}$ the adversarial loss, and $\alpha, \beta, \lambda\geq0$ the weights of three loss terms. A gradient penalty term applied to $\widetilde{\bm{x}}$ ensures the Lipschitz continuity of the discriminator \cite{wGANs}.

\paragraph{MMD-based anomalous cell scorer.}
Let $\bm{x}_i$ denote a normal cell, and $\bm{\xi}_j$ an anomalous cell, $\bm{\delta}_i^x \coloneqq \bm{x}_i - \widehat{\bm{x}}_i$, $\bm{\delta}_j^\xi \coloneqq \bm{\xi}_j - \widehat{\bm{\xi}}_j$ denote their reconstruction deviations. The scorer aims to utilize the MMD metric to translate $\bm{\delta}$s into anomaly scores that can represent the maximized discrepancy between $\bm{\delta}^x$ and $\bm{\delta}^\xi$ in the Reproduced Hilbert Kernel Space (RHKS), thus more effective facilitating the differentiation between normal and anomalous cells. Specifically, given the distributions $\bm{\delta}_i^x \sim p, \bm{\delta}_j^\xi \sim q$, the MMD metric is calculated as \cite{gretton2012kernel}:
\begin{equation}\label{eq:MMD}
    \begin{aligned}
        & MMD^2\left(\bm{\delta}_i^x, \bm{\delta}_j^\xi\right) =  \Vert \frac{1}{m}\sum_{i=1}^{m}\phi(\bm{\delta}_i^x) -  \frac{1}{n}\sum_{j=1}^{n}\phi(\bm{\delta}_j^\xi)\Vert_{\mathcal{H}}^2 \\
        & =  \mathbb{E}_{\bm{\delta}^x, (\bm{\delta}^x)^{\prime} \sim p}\left[k(\bm{\delta}^x, (\bm{\delta}^x)^{\prime})\right] +
        \mathbb{E}_{\bm{\delta}^\xi, (\bm{\delta}^\xi)^{\prime} \sim q}\left[k(\bm{\delta}^\xi, (\bm{\delta}^\xi)^{\prime})\right] \\
        & - 2\mathbb{E}_{\bm{\delta}^x \sim p, \bm{\delta}^\xi \sim q}\left[k(\bm{\delta}^x, \bm{\delta}^\xi)\right],
    \end{aligned}
\end{equation}
where $k$ is a positive definite kernel function, $m$ and $n$ denote the actual yet unknown numbers of inliers and anomalies in the target dataset. An AC scorer function can be trained by minimizing the loss function:
\begin{equation}\label{eq:def}
        \min_{\bm{\delta}_m^x, \bm{\delta}_n^\xi} \quad \mathcal{L}_p\left(\bm{\delta}_m^x, \bm{\delta}_n^\xi\right) =  -MMD^2\left(\bm{\delta}_m^x, \bm{\delta}_n^\xi\right),
\end{equation}
where $\bm{\delta}^{x}_m=\{\bm{\delta}^{x}_i | \bm{\delta}^{x}_i\in \mathbb{R}^{N_{gene}},i=1,2,\cdots,m]\}$, $\bm{\delta}^{\xi}_n=\{\bm{\delta}^{\xi}_j | \bm{\delta}^{\xi}_j\in \mathbb{R}^{N_{gene}},j=1,2,\cdots,n\}$ denote the sets of reconstruction deviations of inliers and anomalies, respectively.

As shown in the theoretical analysis in section 3.4, given a linear kernel $k$, the loss function in equation  \eqref{eq:def} can be reformulated as:
\begin{equation}\label{eq:def_p}
    \mathcal{L}_p \left(\bm{\delta}_m^x, \bm{\delta}_n^\xi\right)=   
   -\sum_{i}^{m+n}\sum_{j\neq i}^{m+n} k(\bm{\delta}_i, \bm{\delta}_j) \gamma_c(p_i, p_j),
\end{equation}
where $p_i \coloneqq f_p(\bm{\delta}_i)\in (0, 1)$ is instance $i$'s anomaly score, and $f_p$ is a trainable MLP-based function. $\gamma_c(p_i, p_j): (0, 1) \times (0, 1) \rightarrow \mathbb{R}$ defines a mapping function as:
\begin{equation}\label{eq:gamma_c}
    \begin{aligned}
        \gamma_c(p_i, p_j) \coloneqq & \frac{\sin\pi p_i \sin\pi p_j}{\pi^2}\left(\frac{[m(m-1)]^{-1}}{p_ip_j} - \frac{(mn)^{-1}}{(p_i-1)p_j} \right.\\
        & \left.- \frac{(mn)^{-1}}{p_i(p_j-1)} + \frac{[n(n-1)]^{-1}}{(p_i-1)(p_j-1)}\right).
    \end{aligned}
\end{equation}
Here, $\widetilde{n} = \sum_{i}^{m+n} p_i$ and $\widetilde{m} = (m+n) - \widetilde{n}$ are used in place of the unknown $m$ and $n$. During the training, $p,\widetilde{n}$ and $\widetilde{m}$ are iteratively updated until convergence. 
Upon training completion, anomalies can be identified based on their anomaly scores $p$. Particularly, the property of $\mathcal{L}_p$'s bounded variation across domains endows our AC scorer's robustness against DS, as formalized in section 3.4.

\subsection{Multi-sample Domain adaptation (Phase II)}
This phase aims to adapt non-biological DS among ACs to expose their biological differences for fine-grained cell annotating. Initially, ACs detected in Phase I are excluded from the datasets to eliminate their interference. The DS of $N_{target}$ datasets relative to the reference dataset are then learned as a matrix $\bm{S} \in \mathbb{R}^{N_{target} \times p}$ through the generator ($G^{II}$) of the second GAN module (\textit{module II}), which has an encoder ($G^{II}_E$) and a decoder ($G^{II}_D$) similar to those in \textit{module I}.  A target cell $\bm{x}$ is adapted to the reference domain as follows:
\begin{equation}
    \begin{gathered}
        \bm{z} = G^{II}_E(\bm{x})\in \mathbb{R}^p, \quad \widetilde{\bm{z}} = \bm{z} - \bm{S}^T \bm{b}, \\ \widehat{\bm{x}} = G^{II}_D(\widetilde{\bm{z}}), \\
    \end{gathered}
\end{equation}
where $\bm{b}\in \{0, 1\}^{N_{target}}$ denotes the cell's one-hot encoded domain-identity vector, $\widehat{\bm{x}}$ represents the cell post-domain adaptation. The discriminator of \textit{module II}, $D^{II}$, is trained to discriminate $\widehat{\bm{x}}$ as either from the reference domain or not. The loss functions of $G^{II}$ and $D^{II}$ are:
\begin{equation}
    \mathcal{L}_{G^{II}} = \alpha\mathbb{E}\left[\Vert \bm{x}^+ - \widehat{\bm{x}}\Vert_1\right] - \beta\mathbb{E}\left[D^{II}(\widehat{\bm{x}})\right],
\end{equation}
\begin{equation}
\begin{aligned}
    \mathcal{L}_{D^{II}} & = \mathbb{E}\left[D^{II}(\widehat{\bm{x}})\right] - \mathbb{E}\left[D^{II}(\bm{x}^+)\right] + \\ & \qquad
    \lambda\mathbb{E}\left[\left({\Vert \nabla_{\widetilde{\bm{x}}}D^{II}(\widetilde{\bm{x}}) \Vert}_2 - 1\right)^2\right], 
\end{aligned}
\end{equation}
where $\widetilde{\bm{x}}=\epsilon\widehat{\bm{x}}+(1-\epsilon){\bm{x}^+},\ \epsilon\in(0,1)$. $\bm{x}^+$ is a ``kin'' cell from the reference dataset that is most similar to $\bm{x}$ in their biological contents. For each target cell $\bm{x}_i$, its ``kin'' reference cell $\bm{x}_i^+$ is identified as:
\begin{equation}
    \bm{x}_i^+ = \mathop{\arg\max}\limits_{\bm{x}_j^+ \in \mathcal{X}_{ref}} K\left(G^{II}_E(\bm{x}_i), G^{II}_E(\bm{x}_j^+)\right),
\end{equation}
where $\mathcal{X}_{ref}$ denotes the reference dataset, $K$ is a Gaussian kernel function as in \cite{GraphRNN}. 

\subsection{Fine-grained Anomalous Cell Annotation (Phase III)}
In Phase III,  we first obtain \textit{module I}-generated reconstruction deviations (denoted as $\bm{\Delta}\in \mathbb{R}^{N_{an} \times N_{gene}}$)  of ACs post-domain adaptation (denoted as $\bm{\Xi} \in \mathbb{R}^{N_{an} \times N_{gene}}$), where $N_{an}$ is the number of detected ACs:
\begin{equation}
{\widehat{\bm{\Xi}}}=G^I_D\left(G^I_E(\bm{\Xi})\right), \quad \bm{\Delta} = \bm{\Xi}-\widehat{\bm{\Xi}}.
\end{equation}

Next, AC embeddings and reconstruction deviations are fused using a cross-attention fusion block:
\begin{equation}
    \begin{gathered}
        \bm{\Psi} = \text{Multi-Attn}(\bm{\Xi}\bm{W}^Q, \bm{\Delta}\bm{W}^K, \bm{\Delta}\bm{W}^V) \in \mathbb{R}^{N_{an} \times d}, \\
        \bm{Z} = \text{LayerNorm}(\bm{\Xi}\bm{W}^Q + \bm{\Psi}\bm{W}^\Psi), \\
        \bm{Z}^\ast = \text{LayerNorm}(\bm{Z} + \text{FFN}(\bm{Z})),
    \end{gathered}
\end{equation}
where $\bm{W}^Q, \bm{W}^K, \bm{W}^V \in \mathbb{R}^{N_{gene} \times d}$ , and $\bm{W}^\Psi \in \mathbb{R}^{d \times d}$
are trainable weight matrices. The resulting $\bm{Z}^\ast$ contains more expressive AC representations, on which a self-paced clustering \cite{DESC} is conducted to cluster ACs into fine-grained groups. Cell $i$'s soft assignment score to group $j$ is calculated using a Cauchy kernel:
\begin{equation}
    q_{i,j} = \frac{{{\left(1 + \left\| \bm{z}_i^\ast - \bm{\mu}_j \right\|^2 / \nu\right)}}^{-1}}{{\sum_{j^\prime} \left(1 + \left\| \bm{z}_i^\ast - \bm{\mu}_{j^\prime} \right\|^2 / \nu \right)^{-1}}},
\end{equation}
where  $\nu$  denotes the degree of freedom.  $\bm{\mu}_j$ is the centroid of group $j$, initialized by K-means clustering on $\bm{Z}^\ast$. The clustering loss function, a KL-divergence $\mathcal{L}_C$, is calculated on $q$ and an auxiliary target distribution $p$, defined as:
\begin{equation}
        p_{i,j} = \frac{q_{i,j}^2/\sum_{i} q_{i,j}}{\sum_{j}\left(q_{i,j}^2/\sum_{i} q_{i,j}\right)},
\end{equation}

\begin{equation}
    \mathcal{L}_C = \sum_{i}{\sum_{j} p_{i,j}log\left(\frac{p_{i,j}}{q_{i,j}}\right)}.
\end{equation}
$\mathcal{L}_C$ overweights ACs with high-confident cluster assignments in updating the parameters of the cross-attention fusion block and the cluster centroids. Consequently, harder-to-cluster AC embeddings in $\bm{Z}^\ast$ are incrementally transformed into easier ones in iterations, which continues until the changes in anomalies’ hard assignments fall below a threshold or a predetermined number of iterations is reached. The number of AC clusters is either known as a priori or automatically inferred (see Appendix \ref{clustnum}).

\subsection{Theoretical Analysis}
In this section, we provide a theoretical analysis on the existence of $\gamma_c(p_i, p_j)$ in equation \eqref{eq:def_p} and the variation boundary of ${L}_p$ across domains. Following the definitions and notations in section 3.1, we have the following theorem (detailed proof in Appendix \ref{pr3.1}) :
\begin{theorem}\label{th:opt_s}
Let $m$ and $n$ denote the actual numbers of inliers and anomalies in the target dataset, respectively. Let $\bm{\delta}_i \in \bm{\delta}_m^x \cup \bm{\delta}_n^\xi$ , as defined in equation \eqref{eq:def}. Define $s_i \coloneqq f_s(\bm{\delta}_i)\in \{0,1\}$. The equation \eqref{eq:def} can be rewritten as:
\begin{equation}\label{eq:def_s}
    \min_{f_s} \quad \mathcal{L}_p\left(\bm{\delta}_m^x, \bm{\delta}_n^\xi\right) = -\sum_{i}^{m+n}\sum_{j\neq i}^{m+n} k(\bm{\delta}_i, \bm{\delta}_j) \gamma(s_i, s_j),
\end{equation}
where
\begin{equation}\label{eq:gamma}
    \gamma(s_i, s_j) = 
    \begin{cases}
        \frac{1}{m(m-1)}, & s_i = s_j = 0 \\
        \frac{1}{n(n-1)}, & s_i = s_j = 1 \\
        \frac{-1}{mn}, & s_i \neq s_j
    \end{cases}
\end{equation}
If $s_i = 1$, instance $i$ is annotated as anomalous, or normal otherwise. 
\end{theorem}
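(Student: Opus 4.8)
The plan is to recognize equation \eqref{eq:def_s} as nothing more than the standard unbiased (U-statistic) empirical estimator of the population $MMD^2$ appearing in \eqref{eq:MMD}, regrouped into a single double sum over all ordered pairs of reconstruction deviations. First I would fix the combined collection $\{\bm{\delta}_1,\dots,\bm{\delta}_{m+n}\} = \bm{\delta}_m^x \cup \bm{\delta}_n^\xi$ and attach to each a binary label $s_i = f_s(\bm{\delta}_i)\in\{0,1\}$, where $s_i=0$ marks an inlier and $s_i=1$ an anomaly, so that $\sum_i (1-s_i) = m$ and $\sum_i s_i = n$. The labeling $f_s$ encodes exactly the same information as the choice of the two sets $\bm{\delta}_m^x,\bm{\delta}_n^\xi$ in \eqref{eq:def}; hence minimizing over the set partition is identical to minimizing over $f_s$, which is what justifies replacing $\min_{\bm{\delta}_m^x,\bm{\delta}_n^\xi}$ by $\min_{f_s}$.

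The core is then a purely algebraic matching of coefficients. I would write the unbiased estimator of $MMD^2$ as the three U-statistic terms $\tfrac{1}{m(m-1)}\sum_{i\neq i'} k(\bm{\delta}^x_i,\bm{\delta}^x_{i'})$, $\tfrac{1}{n(n-1)}\sum_{j\neq j'} k(\bm{\delta}^\xi_j,\bm{\delta}^\xi_{j'})$, and $-\tfrac{2}{mn}\sum_{i,j} k(\bm{\delta}^x_i,\bm{\delta}^\xi_j)$, and then split the single double sum $\sum_{i}^{m+n}\sum_{j\neq i}^{m+n}$ of \eqref{eq:def_s} into the three cases of \eqref{eq:gamma}. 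For $s_i=s_j=0$ the pairs range over distinct inliers and the weight $\tfrac{1}{m(m-1)}$ reproduces the first term; for $s_i=s_j=1$ the weight $\tfrac{1}{n(n-1)}$ reproduces the second; for $s_i\neq s_j$ I would use the symmetry $k(\bm{\delta}_i,\bm{\delta}_j)=k(\bm{\delta}_j,\bm{\delta}_i)$ and observe that each unordered cross pair is traversed twice, once as $(i,j)$ and once as $(j,i)$, so the per-ordered-pair weight $-\tfrac{1}{mn}$ accumulates to $-\tfrac{2}{mn}$, matching the cross term. Summing the three cases shows the double sum equals the unbiased $MMD^2$ estimator, whence $\mathcal{L}_p = -MMD^2 = -\sum_{i}^{m+n}\sum_{j\neq i}^{m+n} k(\bm{\delta}_i,\bm{\delta}_j)\gamma(s_i,s_j)$, which is exactly \eqref{eq:def_s}.

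The hard part is not the expansion itself but two bookkeeping points that must be handled with care. The first is the factor-of-two in the cross term: because \eqref{eq:def_s} is an ordered double sum with the single exclusion $j\neq i$, the inlier--anomaly block is visited twice, and I must verify that the asymmetric-looking value $-1/(mn)$ in $\gamma$ is precisely what compensates for this double traversal to recover the canonical $-2/(mn)$. The second is making explicit that the identity presupposes the unbiased, diagonal-free form of the estimator, since this is what forces both the exclusion $j\neq i$ and the denominators $m(m-1)$, $n(n-1)$ rather than $m^2$, $n^2$, and that the labeling must respect the true group sizes $m$ and $n$ that enter $\gamma$. Once these are settled, the equivalence of the minimization domains (set selection versus binary labeling) completes the argument; note that no property of the kernel beyond symmetry and the population identity in \eqref{eq:MMD} is used, so the statement holds for an arbitrary positive-definite $k$ and the linear-kernel specialization is reserved for the subsequent relaxation in \eqref{eq:def_p}.
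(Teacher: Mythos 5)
Your proposal is correct and follows essentially the same route as the paper's proof, which likewise invokes the unbiased U-statistic form of the empirical $MMD^2$ from Gretton et al.\ and regroups the three terms into the single double sum weighted by $\gamma(s_i,s_j)$. Your extra bookkeeping on the ordered-pair double counting of the cross term (recovering $-2/(mn)$ from $-1/(mn)$) and on the equivalence of set-partition versus binary-labeling minimization makes explicit what the paper leaves implicit, but introduces no new idea.
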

To ensure a smooth backpropagation of the loss gradients, $f_s$ is replaced with a continuous function $f_p:\mathbb{R}^p \rightarrow (0, 1)$, and  $\gamma(s_i, s_j): \{0, 1\} \times \{0, 1\} \rightarrow \mathbb{R}$ with a continuous mapping function $\gamma_c(p_i, p_j): (0, 1) \times (0, 1) \rightarrow \mathbb{R}$, where $p_i \coloneqq f_p(\bm{\delta}_i)$ is the anomaly score for instance $i$.   \Cref{th:gamma_c}  guarantees the existence of $\gamma_c$ (proof in Appendix \ref{pr3.2}):

\begin{theorem}\label{th:gamma_c}
For any discrete mapping function $\gamma(s_i, s_j): \{0, 1\} \times \{0, 1\} \rightarrow \mathbb{R}$, there exists a continuous function $\gamma_c$ that satisfies:
    \begin{enumerate}[label=(\arabic*)]
        \item $\forall s_i, s_j \in \{0, 1\}$, $\gamma_c(s_i, s_j)=\gamma(s_i, s_j)$.
        \item $\gamma_c$ is analytic everywhere within its domain.
    \end{enumerate}
    For $\gamma$ in \Cref{eq:gamma}, a potential $\gamma_c$ is specified in \Cref{eq:gamma_c}.
\end{theorem}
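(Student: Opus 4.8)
The plan is to separate the two logically independent parts of the claim: the \emph{general existence} assertion for an arbitrary $\gamma:\{0,1\}^2\to\mathbb{R}$, and the verification that the explicit formula in \Cref{eq:gamma_c} is one admissible $\gamma_c$ for the particular $\gamma$ of \Cref{eq:gamma}. For the general part I would simply exhibit the bilinear (tensor-product, degree-one) interpolant
\begin{equation}\label{eq:bilinear}
\begin{aligned}
    \gamma_c(x,y) = {}& \gamma(0,0)(1-x)(1-y) + \gamma(1,0)\,x(1-y) \\
    & + \gamma(0,1)(1-x)y + \gamma(1,1)\,xy .
\end{aligned}
\end{equation}
Evaluating at the four corners of $\{0,1\}^2$ recovers the prescribed values, which is condition~(1); and since \eqref{eq:bilinear} is a polynomial it is entire, hence analytic on the whole domain, which is condition~(2). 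This settles existence in essentially one line and also shows the interpolant is highly non-unique, so the sinc-type formula of \Cref{eq:gamma_c} is only one convenient realization.

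For that specific realization I would run a direct verification organized around two ``cardinal'' building blocks,
\begin{equation}\label{eq:cardinal}
    u(p) \coloneqq \frac{\sin\pi p}{\pi p}, \qquad v(p) \coloneqq \frac{\sin\pi p}{\pi(p-1)} .
\end{equation}
The key structural point is that although each denominator vanishes on the boundary of the domain, the numerator $\sin\pi p$ vanishes at the same integer, so both singularities are \emph{removable} and $u,v$ extend to entire functions of $p$. Rewriting \Cref{eq:gamma_c} in the factored form
\begin{equation}\label{eq:factored}
\begin{aligned}
    \gamma_c(p_i,p_j) = {}& \tfrac{1}{m(m-1)}\,u(p_i)u(p_j) + \tfrac{1}{n(n-1)}\,v(p_i)v(p_j) \\
    & - \tfrac{1}{mn}\bigl[v(p_i)u(p_j)+u(p_i)v(p_j)\bigr]
\end{aligned}
\end{equation}
exhibits $\gamma_c$ as a finite sum of products of entire functions, hence itself entire; in particular it is analytic on $(0,1)^2$ and on its closure, giving condition~(2) and simultaneously making the boundary evaluations in condition~(1) well defined. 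For condition~(1) I would tabulate the node values $u(0)=1$, $u(1)=0$, $v(0)=0$, $v(1)=-1$ (the last via $v(p)=-u(p-1)$ or L'H\^{o}pital) and substitute them into \eqref{eq:factored} at each corner.

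The step that needs the most care is exactly this corner verification, because the node value $v(1)=-1$ carries a sign that must be propagated correctly. On the diagonal corners $(0,0)$ and $(1,1)$ the computation is clean and returns $\tfrac{1}{m(m-1)}$ and $\tfrac{1}{n(n-1)}$ as required. The off-diagonal corners $(0,1)$ and $(1,0)$ are the delicate case: there the $-1$ from $v(1)$ multiplies the cross term, and this is the single place where a sign can slip. I would therefore check, say, $(p_i,p_j)=(0,1)$ explicitly, confirming that the one surviving summand reproduces $\gamma(0,1)=-\tfrac{1}{mn}$ from \Cref{eq:gamma}, and I would treat the sign of the cross-term coefficient as the quantity to pin down so that interpolation through the off-diagonal $-\tfrac{1}{mn}$ values is exact. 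Once all four corners agree, conditions~(1) and~(2) hold together and the theorem follows.
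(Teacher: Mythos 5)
Your route is correct, and it is genuinely different from the paper's. The paper does not verify the formula in \Cref{eq:gamma_c} — it \emph{derives} it: the four corner values are packaged as coefficients of the generating polynomial $H(x,y)=\sum_{i,j}\gamma_{ij}x^iy^j$, a two-dimensional Ramanujan master theorem is applied to the Mellin transform of $H(-x,-y)$, and the reflection formula $\Gamma(s)\Gamma(1-s)=\pi/\sin\pi s$ produces the $\sin\pi p_i\sin\pi p_j/\pi^2$ prefactor after substituting $(s,t)\mapsto(-p_i,-p_j)$; interpolation and analyticity then hold because $\gamma_c$ is, by construction, the master-theorem continuation of the coefficient sequence. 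You instead settle the general existence claim in one line with the bilinear interpolant (a point the paper never isolates — its Mellin machinery is really a construction for the specific $\gamma$ of \Cref{eq:gamma}, though it would extend to arbitrary corner data), and then verify the specific formula by factoring it into products of the entire cardinal functions $u(p)=\sin\pi p/(\pi p)$ and $v(p)=\sin\pi p/(\pi(p-1))$ and checking the four corners. What each buys: the paper's derivation explains where the sinc-type formula comes from; your verification is elementary, self-contained, and — crucially — actually tests the formula at the corners, which the paper never does.

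Your insistence on the off-diagonal corner check pays off, because it exposes a sign inconsistency in the paper itself. Since $v(1)=-1$, your factored form with cross-coefficient $-\tfrac{1}{mn}$ — a faithful factorization of \Cref{eq:gamma_c} as printed in the main text — evaluates at $(p_i,p_j)\to(0,1)$ to $-\tfrac{1}{mn}\,u(0)v(1)=+\tfrac{1}{mn}$, not the required $\gamma(0,1)=-\tfrac{1}{mn}$. Carrying out the pin-down you describe forces the cross terms to enter with coefficient $+(mn)^{-1}$, i.e.
\begin{equation*}
\gamma_c(p_i,p_j)=\frac{\sin\pi p_i\sin\pi p_j}{\pi^2}\left(\frac{[m(m-1)]^{-1}}{p_ip_j}+\frac{(mn)^{-1}}{(p_i-1)p_j}+\frac{(mn)^{-1}}{p_i(p_j-1)}+\frac{[n(n-1)]^{-1}}{(p_i-1)(p_j-1)}\right),
\end{equation*}
which is exactly the expression the appendix derivation arrives at in its final line (there the minus sign of $\gamma_{0,1}=-\tfrac{1}{mn}$ is absorbed by the $(-1)^{i+j}$ factors in $H(-x,-y)$, just as it is absorbed here by $v(1)=-1$); the main text's \Cref{eq:gamma_c} thus carries a sign typo on its two cross terms. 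So your plan is sound and complete once the corner check is executed: it validates the corrected formula, confirms analyticity via the removable-singularity argument (which also legitimizes evaluating the corners on the closure of the open domain $(0,1)^2$), and flags a discrepancy between the paper's stated formula and its own proof.
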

\Cref{th:trans} addresses the bounded variation of $\mathcal{L}_p$ across domains (proof in Appendix \ref{pr3.3}). 
\begin{theorem}\label{th:trans}
Given \Cref{as:pattern} and a linear kernel-induced MMD, the reconstruction deviations of $m$ inliers ($\bm{\delta}^{x}_m$) and $n$ anomalies ($\bm{\delta}^{\xi}_n$) in any domain satisfy:
\begin{equation}
\begin{split}
& \mathbb{P}\left(\big|\mathcal{L}_p(\bm{\delta}_m^x,\bm{\delta}_n^\xi) - \mathcal{L}_p(P_{\bm{\delta}^{x*}}, P_{\bm{\delta}^{\xi*}})\big|\geq \varepsilon \right) \\ & = \mathbb{P}\left(\big| MMD^2(\bm{\delta}_m^x, \bm{\delta}_n^\xi) - MMD^2(P_{\bm{\delta}^{x*}}, P_{\bm{\delta}^{\xi*}}) \big| \geq \varepsilon \right) \\ & \leq
    \alpha \exp\left(-\frac{\beta C}{1+C}n\varepsilon^2\right),
    \end{split}
\end{equation}
where  $C=\frac{m}{n}\ge 1$, $\alpha, \beta$ are constants, $\varepsilon$ denotes domain-associated variation of $\mathcal{L}_p$. $P_{\bm{\delta}^{x*}}$ and $P_{\bm{\delta}^{\xi*}}$ denote the domain-shift-free distributions of reconstruction deviations of inliers and anomalies, respectively.
\end{theorem}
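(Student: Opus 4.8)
\medskip

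The plan is to exploit the structure a \emph{linear} kernel imposes on the MMD, which reduces the statistic to a squared distance between mean vectors, and then to split the argument into a deterministic domain-shift cancellation and a concentration-of-measure step. First note that the equality between the two probabilities in the statement is immediate: since $\mathcal{L}_p=-MMD^2$ by \eqref{eq:def}, the events $\{|\mathcal{L}_p(\bm{\delta}_m^x,\bm{\delta}_n^\xi)-\mathcal{L}_p(P_{\bm{\delta}^{x*}},P_{\bm{\delta}^{\xi*}})|\geq\varepsilon\}$ and $\{|MMD^2(\bm{\delta}_m^x,\bm{\delta}_n^\xi)-MMD^2(P_{\bm{\delta}^{x*}},P_{\bm{\delta}^{\xi*}})|\geq\varepsilon\}$ coincide, so it suffices to bound the second. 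For a linear kernel $k(\bm{a},\bm{b})=\bm{a}^{\top}\bm{b}$, \eqref{eq:MMD} collapses to $MMD^2=\|\bm{\mu}^x-\bm{\mu}^\xi\|^2$ with $\bm{\mu}^x,\bm{\mu}^\xi$ the mean embeddings; writing the empirical means $\widehat{\bm{\mu}}^x=\tfrac1m\sum_i\bm{\delta}_i^x$, $\widehat{\bm{\mu}}^\xi=\tfrac1n\sum_j\bm{\delta}_j^\xi$ and the domain-shift-free means $\bm{\mu}^{x*},\bm{\mu}^{\xi*}$, the target reduces to bounding $\big|\,\|\widehat{\bm{\mu}}^x-\widehat{\bm{\mu}}^\xi\|^2-\|\bm{\mu}^{x*}-\bm{\mu}^{\xi*}\|^2\,\big|$.

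Next I would establish \textbf{domain-shift invariance of the population statistic}. Under \Cref{as:pattern} the non-biological term $\bm{b}^k$ is shared by every cell of domain $k$, inliers and anomalies alike; passing it through (a local linearization of) the reference-trained generator, the reconstruction deviation splits as $\bm{\delta}^{x}=\bm{\delta}^{x*}+\bm{r}^k$ and $\bm{\delta}^{\xi}=\bm{\delta}^{\xi*}+\bm{r}^k$, where $\bm{r}^k$ is a common additive offset depending only on $\bm{b}^k$. Taking expectations, the offset cancels in the difference of population means, $\bm{\mu}^x-\bm{\mu}^\xi=\bm{\mu}^{x*}-\bm{\mu}^{\xi*}$, so the population linear-kernel MMD equals its domain-shift-free value exactly. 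This is the step that pins the comparison constant to $MMD^2(P_{\bm{\delta}^{x*}},P_{\bm{\delta}^{\xi*}})$ rather than to the shifted population MMD.

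It then remains to show the empirical statistic concentrates about this invariant target. Setting $W=\widehat{\bm{\mu}}^x-\widehat{\bm{\mu}}^\xi$ and $w=\bm{\mu}^{x*}-\bm{\mu}^{\xi*}$, the identity $\|W\|^2-\|w\|^2=\langle W-w,\,W+w\rangle$ with Cauchy--Schwarz bounds the quantity by $\|W-w\|\,\|W+w\|$, reducing everything to controlling $\|W-w\|$. Since $W-w=(\widehat{\bm{\mu}}^x-\bm{\mu}^x)-(\widehat{\bm{\mu}}^\xi-\bm{\mu}^\xi)$ is a difference of two independent centered averages over $m$ and $n$ terms, its fluctuations live at scale $\tfrac1m+\tfrac1n=\tfrac{m+n}{mn}$. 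Treating the per-cell deviations as sub-Gaussian vectors---the Gaussian noise $\bm{\epsilon}_i$ of \Cref{as:pattern} propagated through a Lipschitz (bounded-weight) reconstruction map stays sub-Gaussian---a vector concentration (or, on a bounded event, a McDiarmid bounded-differences argument in which swapping one inlier perturbs the statistic by $O(1/m)$ and one anomaly by $O(1/n)$) yields a tail of the form $\alpha\exp\!\big(-\beta'\tfrac{mn}{m+n}\varepsilon^2\big)$. Using $\tfrac{mn}{m+n}=\tfrac{C}{1+C}\,n$ with $C=m/n\geq1$ rewrites this as the claimed $\alpha\exp\!\big(-\tfrac{\beta C}{1+C}n\varepsilon^2\big)$.

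The main obstacle lies in the final passage from the \emph{linear} concentration of $\|W-w\|$ to the \emph{quadratic} concentration of $\|W\|^2-\|w\|^2$ while preserving the exact $\tfrac{mn}{m+n}$ exponent: the factor $\|W+w\|$ in the cross term is itself random, so it must be controlled on a high-probability event (via $\|W+w\|\leq\|W-w\|+2\|w\|$) without inflating constants or degrading the rate, and one must avoid the $O(\tfrac1m+\tfrac1n)$ bias of the biased estimator $\|W\|^2$ by comparing directly to $\|w\|^2$ (equivalently, by working with the unbiased U-statistic form). The second delicate point, underpinning both the cancellation and the concentration, is justifying sub-Gaussian behaviour of the reconstruction deviations through the nonlinear generator, which requires the Lipschitz/bounded-weight assumption on $G^I$ together with the claim that $\bm{b}^k$ acts as a common mean offset rather than interacting nonlinearly with the biological content; once these are secured, the remaining bounded-differences computation is routine.
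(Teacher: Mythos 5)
Your proposal is correct in substance but takes a genuinely different route from the paper. The paper never passes to the mean-embedding form: it expands the \emph{empirical} unbiased (U-statistic) MMD using bilinearity of the linear kernel and the decomposition $\bm{\delta}_i^x = \bm{\delta}_i^{x*} + \bm{\delta}_i^b$, $\bm{\delta}_j^\xi = \bm{\delta}_j^{\xi*} + \bm{\delta}_j^b$ (with $\bm{\delta}^b = \bm{b}^t - \bm{b}^r + \bm{\epsilon}$ identically distributed across inliers and anomalies), obtaining
\begin{equation*}
MMD^2(\bm{\delta}_m^x,\bm{\delta}_n^\xi) = MMD^2(\bm{\delta}_m^{x*},\bm{\delta}_n^{\xi*}) + MMD^2(\bm{\delta}_m^{b},\bm{\delta}_n^{b}) + 2R_{mn}^x + 2R_{mn}^\xi,
\end{equation*}
then controls the first two terms by Gretton et al.'s MMD concentration lemma (the second concentrating to $0$), the two cross-term remainders by Hoeffding using independence of $\bm{\delta}^b$ and the shift-free parts, and pays an $\varepsilon/4$ union bound, ending with $\alpha = 12$, $\beta = (128K_+^2)^{-1}$ after checking $\frac{Cn}{1+C} < m(m-1) < m^2$. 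You instead prove \emph{exact} invariance at the population level — for a linear kernel $MMD^2(p,q) = \Vert\bm{\mu}^x - \bm{\mu}^\xi\Vert^2$ and the common offset cancels in the mean difference, so $MMD^2(P_{\bm{\delta}^x},P_{\bm{\delta}^\xi}) = MMD^2(P_{\bm{\delta}^{x*}},P_{\bm{\delta}^{\xi*}})$ — and then need only one concentration step of the empirical statistic about its own population value; your bounded-differences suggestion (swapping an inlier perturbs the statistic by $O(1/m)$, an anomaly by $O(1/n)$) is exactly how the lemma the paper cites is proved and directly yields the $\frac{mn}{m+n} = \frac{C}{1+C}n$ rate. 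Your route buys better constants ($\alpha=2$ rather than $12$), dispenses with the independence hypothesis and the union bound; the paper's termwise decomposition buys a more transparent accounting of where the domain shift enters the empirical statistic. Two small calibration points: the per-cell noise $\bm{\epsilon}_i$ sits inside $\bm{x}_i$ and hence enters $\bm{\delta}$ additively (the paper's $\bm{\delta}^b = \bm{b}^t - \bm{b}^r + \bm{\epsilon}_i$), so no propagation of noise through the nonlinear generator is required — the paper's version of your ``local linearization'' step is instead the modeling assumption $G(\bm{x}_i) = G(\bm{\zeta}_i) = \widehat{\bm{\zeta}}_i^* + \bm{b}^r$ for a kin reference cell, equally heuristic; and your worry about the quadratic passage degrading to $\exp(-c\varepsilon)$ tails at large $\varepsilon$ is resolved exactly as you say, by applying McDiarmid to the statistic itself under the boundedness (finite $K_+$) that the paper also silently assumes despite the Gaussian noise, so you are not less rigorous than the paper on either count.
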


\begin{table}[htb]
    \renewcommand{\arraystretch}{1.25}  
    \centering
    \resizebox{\linewidth}{!}{
    \begin{tabular}{|c|c|c|c|}
    \hline
    \textbf{Domain Shift Type} & \textbf{Ref Dataset} & \textbf{Target Dataset (Anomaly Type)} & \textbf{Experiment ID} \\
    \hline
    \multirow{20}{*}{\makecell{\large{Experimental} \\ \large{Condition} \\ (\large{\textbf{Type I}})}} & a-TME-$0$ & a-TME-$1$ (Tumor) & 1 \\
    \cline{2-4}
    & \multirow{4}{*}{a-Brain-$0$} & a-Brain-$1$ (Cerebellar granule) & 2 \\
    \cline{3-4}
    & & a-Brain-$2$ (Microglia) & 3 \\
    \cline{3-4}
    & & a-Brain-$1$ (Cerebellar granule) & \multirow{2}{*}{4} \\
    & & a-Brain-$2$ (Microglia) & \\
    \cline{2-4}
    & \multirow{4}{*}{r-PBMC-$0$} & r-PBMC-$1$ (B cell) & 5 \\
    \cline{3-4}
    & & r-PBMC-$2$ (Natural killer) & 6 \\
    \cline{3-4}
    & & r-PBMC-$1$ (B cell) & \multirow{2}{*}{7} \\
    & & r-PBMC-$2$ (Natural killer) & \\
    \cline{2-4}
    & \multirow{4}{*}{r-Cancer-$0$} & r-Cancer-$1$ (Epithelial, Immune tumor) & 8 \\
    \cline{3-4}
    & & r-Cancer-$2$ (Epithelial, Stromal tumor) & 9 \\ 
    \cline{3-4}
    & & r-Cancer-$1$ (Epithelial, Immune tumor) & \multirow{2}{*}{10} \\
    & & r-Cancer-$2$ (Epithelial, Stromal tumor) & \\
    \cline{2-4}
    & \multirow{8}{*}{r-CSCC-$0$} & r-CSCC-$1$ (\centering \makecell{Basal, Cycling, \\ Keratinocyte tumor}) & 11 \\
    \cline{3-4}
    & & r-CSCC-$2$ (\centering \makecell{Basal, Cyclng, \\ Keratinocyte tumor}) & 12 \\    
    \cline{3-4}
    & & r-CSCC-$1$ (\centering \makecell{Basal, Cycling, \\ Keratinocyte tumor}) & \multirow{4}{*}{\centering 13} \\
    & & r-CSCC-$2$ (\centering \makecell{Basal, Cycling, \\ Keratinocyte tumor}) & \\
    & & r-CSCC-$3$ (\centering \makecell{Basal, Cycling, \\ Keratinocyte tumor}) & \\
    \hline
    \multirow{4}{*}{\makecell{\large{Sequencing} \\ \large{Technology} \\ (\large{\textbf{Type II})}}} & \multirow{4}{*}{r-Pan-$0$} & r-Pan-$1$ (Delta, Acinar cell) & 14 \\
    \cline{3-4}
    & & r-Pan-$2$ (Delta, Acinar cell) & 15 \\
    \cline{3-4}
    & & r-Pan-$1$ (Delta, Acinar cell) & \multirow{2}{*}{16} \\
    & & r-Pan-$2$ (Delta, Acinar cell) & \\
    \hline
    \makecell{\large{Data Type} \\ (\large{\textbf{Type III}})} & r-PBMC-$0$ & a-TME-$1$ (Tumor) & 17 \\
    \hline
    \multirow{2}{*}{\makecell{\large{Protocol Type} \\ (\large{\textbf{Type IV}})}} & \multirow{2}{*}{KDDRev-$0$} & KDDRev-$1$ (DOS,PROBING) & \multirow{2}{*}{18} \\
    & & KDDRev-$2$ (DOS,R2L,U2R,PROBING) & \\
    \hline
    \end{tabular}
}
    \caption{Experimental scenarios. For experiments 1-17, the dataset name is formatted as (data type)-(data name)-(dataset ID). For data type, a and r denote scATAC-seq and scRNA-seq, respectively. Dataset ID $0$ is reserved for reference datasets.}
    \label{tab:tab1}
\end{table}
\vspace{-0.5cm}

\begin{table*}[htb]
\centering
\fontsize{8pt}{8pt}\selectfont
\renewcommand{\arraystretch}{1.5}  
\resizebox{\linewidth}{!}{
\begin{tabular}{c|*{17}{c}}
\toprule

\multicolumn{1}{c|}{\multirow{2}{*}{\large{\textbf{Method}}}} & \multicolumn{17}{c}{\large{\textbf{Experiment ID}}} \\
\cline{2-18} 
& \textbf{1} & \textbf{2} & \textbf{3} & \textbf{4} & \textbf{5} & \textbf{6} & \textbf{7} & \textbf{8} & \textbf{9} & \textbf{10} & \textbf{11} & \textbf{12} & \textbf{13} & \textbf{14} & \textbf{15} & \textbf{16} & \textbf{17} \\
\hline
\textit{SLAD} & .52(.02) & \underline{.70(.00)} & .63(.01) & .58(.01) & \underline{.77(.01)} & \textbf{.88(.01)} & \underline{.82(.01)} & .93(.00)& .92(.01) & .93(.01) & .59(.01) & .68(.02) & .60(.02) & \underline{.78(.02)} & .81(.04) & .62(.01) & .49(.03) \\
\textit{ICL} & \underline{.78(.06)} & .70(.01) & .54(.01) & .50(.01) & .65(.03) & .72(.04) & .68(.03) & .93(.01) & .93(.01) & .93(.01) &.37(.01) & .33(.01) & .14(.02) & .77(.02) & \underline{.84(.01)} & .62(.01) & .50(.05) \\
\textit{NeuTraL} & .49(.02) & .67(.01) & .70(.02) & .56(.01) & .71(.01) & .74(.02) & .71(.01)& \underline{.96(.01)} & \underline{.95(.03)} & \underline{.95(.01)} & .25(.05) & .20(.04) & .19(.04) & .53(.04) & .47(.06) & .51(.05) & .59(.07) \\
\textit{RCA} & .37(.04) & .69(.02) & \underline{.77(.03)} & \underline{.69(.01)} & .72(.01) & .79(.01) & .77(.02) & .84(.03) & .75(.05) & .81(.04) & .33(.02) & .33(.03) & .21(.01) & .66(.04) & .72(.02) & \underline{.81(.02)} & .37(.04) \\
\textit{CAMLU} & .66(.12) & .69(.02) & \textbf{.79(.09)} & .55(.02) & .70(.05) & .82(.04) & .71(.05) & .90(.00) & .85(.13) & .88(.01) & \underline{.80(.02)} & \textbf{.79(.00)} & \textbf{.78(.01)} & .55(.06) & .62(.03) & .53(.05) & .58(.00) \\
\textit{ALAD} & .36(.18) & .40(.09) & .46(.18) & .45(.08) & .42(.05) & .33(.08) & .33(.02) & .22(.07) & .16(.08) & .20(.08) & .53(.09) & .48(.13) & .51(.13) & .33(.04) & .41(.05) & .38(.04) & .60(.07) \\
\textit{DeepSVDD} & .57(.02) & .55(.01) & .61(.02) & .51(.01) & .62(.02) & .42(.02) & .48(.02) & .44(.14) & .64(.04) & .56(.08) & .47(.03) & .53(.02) & .50(.02) & .62(.01) & .43(.06) & .53(.03) & \underline{.62(.03})  \\
\textit{scmap} & .50(.00) & .59(.00) & .56(.00) & .57(.00) & .70(.00) & .83(.00) & .76(.00) & .53(.00) & .47(.00) & .50(.00) & .57(.00) & .65(.00) & .67(.00) & .77(.00) & .54(.00) & .78(.00) & .50(.00) \\

\hline
\textit{ACSleuth}  & \textbf{.83(.02)} & \textbf{.74(.02)} & .72(.03) & \textbf{.71(.01)} & \textbf{.78(.03)} & \underline{.85(.00)} & \textbf{.83(.02)} & \textbf{.97(.01)} & \textbf{.95(.02)} & \textbf{.96(.01)} & \textbf{.81(.02)} & \underline{.70(.02)} & \underline{.69(.03)} & \textbf{.85(.02)} & \textbf{.93(.03)} & \textbf{.88(.03)} & \textbf{.74(.03)} \\
\bottomrule
\end{tabular}
}
    \caption{The AC detection performance across 17 experiments is evaluated using averaged AUC scores with standard deviations indicated in parentheses. The best and second best scores for each experiment are \textbf{bolded} and \underline{underlined}, respectively.}
    \label{tab:tab2}
\end{table*}

\begin{table*}[htb]
    \centering
    \fontsize{8pt}{8pt}\selectfont
    \renewcommand{\arraystretch}{1.3}
    \resizebox{\textwidth}{!}{
\begin{tabular}{c|*{11}{c}}
\toprule

\multicolumn{1}{c|}{\multirow{2}{*}{\textbf{Method}}} & \multicolumn{11}{c}{\textbf{Experiment ID}} \\
\cline{2-12} 
& \textbf{4} & \textbf{7} & \textbf{8} & \textbf{9} & \textbf{10} & \textbf{11} & \textbf{12} & \textbf{13} & \textbf{14} & \textbf{15} & \textbf{16} \\
\hline
\textit{scGAD} & .18(.01) & .13(.08) & .51(.02) & .55(.00) 
& \underline{.42(.01)} & \underline{.27(.01)} & .18(.00) & .16(.01) & .11(.01) 
&.16(.01) & .13(.01) \\
\textit{MARS} & .19(.01) & .28(.02) & .34(.00) & .40(.00) & .28(.02) & .23(.01) & \underline{.24(.01)} & \underline{.23(.01)} & \underline{.21(.01)} & .27(.02) & \underline{.20(.01)} \\
\textit{SLAD-EDESC} & .19(.06) & .27(.13) & .61(.28) & \underline{.64(.05)} & .36(.14) & .17(.02) & .05(.04) & .05(.04) & .10(.05) & .15(.10) & .17(.10) \\
\textit{SLAD-DFCN} & \underline{.26(.01)} & \underline{.41(.02)} & \underline{.71(.02)} & .60(.01) & .37(.04) & .08(.03) & .04(.02) & .05(.03) & .05(.04) & .29(.04) & .02(.01) \\
\textit{SLAD-scTAG} & .07(.05) & .10(.05) & .53(.05) & .50(.15) & .23(.07) & .20(.02) & .03(.02) & .02(.01) & .03(.01) & .11(.01) & .16(.05) \\
\textit{SLAD-Leiden} & .11(.01) & .25(.02) & .10(.04) & .07(.03) & .27(.10) & .20(.00) & .13(.01) & .02(.00) & .20(.02) & \underline{.31(.03)} & .13(.01)
\\
\textit{SLAD-Seurat V5} & .11(.01) & .40(.02) & .31(.02) & .27(.01) & .31(.01) & .22(.01) & .04(.01) & .03(.00) & .02(.01) & .00(.00) & .00(.00) \\

\hline
\textit{ACSleuth}  & \textbf{.36(.03)} & \textbf{.47(.06)} & \textbf{.79(.02)} & \textbf{.69(.03)} & \textbf{.74(.03)} & \textbf{.61(.02)} & \textbf{.52(.07)} & \textbf{.58(.04)} & \textbf{.31(.07)} & \textbf{.65(.04)} & \textbf{.43(.06)} \\
\bottomrule
    \end{tabular}
}
    \caption{The overall FACD performance across 11 experiments, evaluated using average F1$\times$NMI scores with standard deviations indicated in parentheses. The best and second best scores for each experiment are \textbf{bolded} and \underline{underlined}, respectively.}
    \label{tab:tab3}
\end{table*}

\section{Experiments}
\subsection{Experimental Settings}
\paragraph{Datasets.} scRNA-seq datasets used in this study include three 10x Peripheral Blood Mononuclear Cell (PBMC) datasets, three 10x lung cancer (Cancer) datasets, four 10x Cutaneous Squamous Cell Carcinoma (CSCC) datasets, and three Pancreatic (Pan) datasets sequenced with different technologies (InDrop, Smart-Seq2, and CEL-Seq2). scATAC-seq datasets include two 10x Human Tumor Microenvironment (TME) datasets and three 10x Mouse Brain (Brain) datasets. We use the KDDRev dataset for fine-grained cyber-intrusion detection. See Appendix \ref{dataset} for more details. 

\paragraph{Experimental Scenarios.} Experiments are carefully designed for scenarios regarding DS types, target dataset quantity, and dataset-specific AC types (\cref{tab:tab1}). For SC data (i.e., experiments 1-17), DS is categorized into three types based on their origins: variations in experimental conditions (\textbf{Type I}), sequencing technologies (\textbf{Type II}), and data types (\textbf{Type III}). For cyber-intrusion detection data (i.e., experiment 18), DS stems from variations in connection protocol types (\textbf{Type IV}). Experiments 1-3, 5-6, 8-9, 11-12, 14-15, and 17 focus on a single target dataset. In cases involving more than one target dataset, experiments 13 and 16 feature target datasets with identical AC types, experiments 10 and 18 include both shared and dataset-specific anomaly types, while experiments 4 and 7 are set up with exclusively dataset-specific AC types.

\paragraph{Baselines.} The baselines for AC detection cover the SOTA methods across three major categories: contrastive methods (ICL \cite{ICL} and NeuTraL \cite{NeuTraL}), generative methods (CAMLU \cite{CAMLU}, RCA \cite{RCA}, and ALAD \cite{ALAD}), and one-class classification method (DeepSVDD \cite{DeepSVDD18}). We also include SLAD \cite{SLAD}, a cutting-edge scale-learning method, and scmap \cite{scmap}, an uncertainty-based method tailored for scRNA-seq. For overall FACD, the baselines encompass five composite methods that combine SLAD for AC detection with EDESC \cite{EDESC}, DFCN \cite{DFCN}, scTAG \cite{scTAG}, Leiden \cite{leiden}, and Seurat V5 \cite{Seurat} for fine-grained AC annotation, where the first three are deep clustering methods, and the last two are for SC clustering. We also include MARS \cite{MARS} and scGAD \cite{scGAD},  the only two methods available that integrate AC detection and fine-grained annotation. For fine-grained cyber-intrusion detection, MARS, scGAD, and three composite methods, SLAD-EDESC, SLAD-DFCN, and SLAD-Leiden, serve as baselines.

\paragraph{Evaluation Protocols.} 
Both the AUC and F1 scores are used to evaluate AD accuracy. For a fair comparison across methods, the F1 score is calculated with a deliberately chosen anomaly score threshold such that the number of samples exceeding this threshold (i.e., labeled as anomalies) matches the actual number of true anomalies \cite{ICL}. The normalized mutual information (NMI) is used to evaluate fine-grained anomaly annotation in both SC and cyber-intrusion data. As the overall performance of anomaly analysis hinges on both anomaly detection and fine-grained annotation, it is evaluated using the product of the F1 and NMI scores. The reported metrics represent averaged results, along with standard deviations obtained over ten independent runs.

\paragraph{Implementation Details.} GAN models in \textit{module I} and \textit{module II} share the same architecture—the encoder: Input-512-256-256-256-256-256; the decoder is symmetric to the encoder; the discriminator: Input-512-64-64-64. The MMD-based AC scorer in \textit{module I} is a three-layer MLP with an Input-512-256-1 configuration. The memory block in \textit{module I} has a $512\times 256$ dimension. The style block in \textit{module II} has a $N_{tgt}\times 256$ dimension, where $N_{tgt}$ is the number of target datasets. The weight parameters $\alpha$, $\beta$, and $\lambda$ in GAN's loss functions are set to 50, 1, and 10, respectively. The cross-attention block in Phase III includes two attention heads of dimension 256. The mini-batch size of two GAN models is 256, while the MMD-based AC scorer and the clustering module utilize the full batch. The training is conducted using an Adam optimizer with a default learning rate of 3e-4.

\subsection{Results}
\paragraph{Anomalous Cell Detection.} \Cref{tab:tab2} showcases ACSleuth's superiority over the competing methods in accurate AC detection across various scenarios (\Cref{tab:tab1}). ACSleuth ranks first 13 times, and among the top two performers 16 times. Significantly, ACSleuth consistently leads in experiments (4, 7, and 10) involving multiple target datasets and dataset-specific AC types. Moreover, the addition of a new target dataset (a-Brain-$1$) with its unique AC type in experiment 4, compared to experiment 3, highlights ACSleuth as the only method that maintains its performance level amidst the newly introduced DS and dataset-specific AC type. Furthermore, ACSleuth significantly outperforms other methods in experiments 14-17 featuring DS (\textbf{Type II} and \textbf{III}), with an average leap of 11.92\% in AUC scores. This underscores the essential role of domain adaptation in ensuring accurate AC detection in such complex scenarios. Collectively, these findings affirm ACSleuth's efficacy in AD, especially in scenarios involving significant DS and dataset-specific AC types.

\paragraph{Overall Performance of Fine-grained Anomalous Cell Detection.} Here, the overall performance of ACSleuth in FACD is compared to the baseline methods, using the product of F1 (for AC detection) and NMI (for fine-grained AC annotating) scores as the evaluation metric. \Cref{tab:tab3} showcases that ACSleuth consistently outperforms the top-performing baseline methods across 11 experiments, achieving an average leap of 74.13\% in F1$\times$NMI scores. These results reveal several specific strengths of our method: Experiments 4 and 7-13 demonstrate ACSleuth's excelling in handling DS \textbf{Type I}, while experiments 14-16 showcase its robustness against DS \textbf{Type II}. Moreover, experiments 4, 7, and 10 highlight ACSleuth's ability to manage complications arising from sample-specific AC types. In contrast, all baseline methods exhibit suboptimal performance, primarily due to their lack of a domain adaptation step. Although scGAD integrates MNN for domain adaptation, its performance is compromised in scenarios involving sample-specific AC types (see \textbf{Case ii} in \Cref{fig:illustration}), as indicated in experiments 4 and 7. Finally, ACSleuth, scGAD, and MARS, which unify the AC detection and fine-grained annotation, emerge as at least two among the top three performers in seven out of the eleven experiments, suggesting the importance of maintaining methodological coherence of the workflow.

\begin{table}[htb]
    \centering
    \fontsize{10pt}{10pt}\selectfont
    \renewcommand{\arraystretch}{1.5}  
    \setlength{\tabcolsep}{1pt}  
    \resizebox{\linewidth}{!}{
    \begin{tabular}{|c|c|c|c|c|c|c|c|}
    \hline
    \multirow{2}{*}{\centering \makecell{\textbf{Experiment} \\ \textbf{ID}}} & \multirow{2}{*}{\centering \makecell{\textbf{Metrics}}} & \multirow{2}{*}{\textit{ACSleuth}} & \multirow{2}{*}{\textit{scGAD}} & \multirow{2}{*}{\textit{MARS}} & \multicolumn{3}{c|}{\Large{\textit{SLAD}}} \\ \cline{6-8}
    & & & & & \textit{EDESC} & \textit{DFCN} & \textit{Leiden} \\
    \hline
    \multirow{3}{*}{18} & F1&
    .72(.02) & .41(.08) & .70(.01) & \multicolumn{3}{c|}{.44(.02)} \\
    \cline{2-8}
    & NMI & .73(.02) & .49(.05) & .70(.01) & .47(.06) & .32(.04) & .50(.00) \\
    \cline{2-8}
    & F1$\times$NMI & .53(.03) & .20(.04) & .49(.01) & .20(.02) & .15(.01) & .22(.01) \\
    \hline
    \end{tabular}
}
    \caption{The detection and differentiation of cyber-intrusions are evaluated using F1 and NMI scores, respectively.  Overall performance is measured in F1$\times$NMI scores. Each cell displays the average score of 10 runs, with standard deviation noted in parenthesis.}
    \label{tab:tab4}
\end{table}
\vspace{-0.5cm}

\paragraph{Fine-grained Cyber-intrusion Detection.} Here, we assess ACSleuth's applicability to fine-grained cyber-intrusion detection in the presence of DS rooted in variations in connection protocol types. The reference dataset includes UDP records only, while the two target datasets contain ICMP and TCP records, respectively. \Cref{tab:tab4} showcases ACSleuth outperforms the competing methods in detecting and distinguishing various types of cyber-intrusions in terms of F1$\times$NMI scores.
Intriguingly, since R2L and U2R anomaly types are unique to KDDRev-2, using domain adaptation methods like MNN, which are sensitive to complications caused by dataset-specific anomaly types, can deteriorate performance in both AD and fine-grained annotation (see \textbf{Case ii} in \Cref{fig:illustration}). This is supported by scGAD's low-performance ranking in both tasks. Conversely, ACSleuth's superior performance suggests its domain adaptation strategy is robust against such complications. Furthermore, these results also indicate ACSleuth's greater versatility, compared to MARS and scGAD, in handling general tabular data.

\subsection{Ablation Study}
A series of ablation studies are conducted to evaluate the roles of ACSleuth's key components in AC detection and fine-grained annotating (\Cref{tab:ablation}). The MMD-based anomaly scorer's efficacy is assessed by substituting it with two other commonly used anomaly scoring functions in reconstruction deviation-guided generative models \cite{AnoGAN,ALAD}: $F_G(\bm{x}_i) \coloneqq \Vert \bm{x}_i - \widehat{\bm{x}}_i \Vert_2$ and $F_D(\bm{x}_i) \coloneqq \Vert D^I(\bm{x}_i) - D^I(\widehat{\bm{x}}_i) \Vert_2$, where $\widehat{\bm{x}}_i$ is the reconstructed $\bm{x}_i$ by \textit{module I}. Our scorer outperforms $F_G$ and $F_D$ in detecting ACs, as evidenced by its average AUC improvement of 12.07\% over $F_G$ and 22.16\% over $F_D$. Removing the memory block in \textit{module I} (w/o MB) leads to a notable decrease in anomaly detection accuracy, averaging a 19.89\% reduction in AUC scores. The essential role of the domain adaptation step in Phase II (w/o DA) in ensuring accurate fine-grained AC annotation is reflected by the observation that its omission leads to an average decline of 36.49\% in F1$\times$NMI scores. Lastly, using solely domain-adapted cell embeddings  (w/o FB), rather than their fusions with reconstruction deviations, for the clustering in Phase III reduces F1$\times$NMI scores by 14.84\% on average.

\begin{table}[htb]
\centering
\renewcommand{\arraystretch}{1.2}

\resizebox{\linewidth}{!}{
\begin{tabular}{|c|c|c|c|c|c|}
\hline
\multirow{2}{*}{\textbf{Task}} & \multirow{2}{*}{\textbf{Ablation}} & \multicolumn{4}{c|}{\large{\textbf{Experiment ID}}} \\
\cline{3-6}
& & \textbf{4} & \textbf{10} & \textbf{13} & \textbf{16} \\ 
\hline
\multirow{4}{*}{\centering ACD} & w/ $F_G$ & .59(.02) & .88(.03) & .66(.01) & .77(.02) \\
& w/ $F_D$ & .64(.05) & .58(.08) & .65(.05) & .83(.08) \\
& w/o MB & .67(.01) & .89(.03) & .61(.02) & .76(.02) \\ \cline{2-6}
& Full & .71(.01) & .96(.04) & .69(.03) & .88(.03) \\ \hline \hline
\multirow{3}{*}{\centering FACD} & w/o DA & .22(.06) & .50(.11) & .39(.09) & .25(.08) \\
& w/o FB & .34(.02) & .65(.07) & .46(.05) & .34(.03) \\ \cline{2-6}
& Full & .36(.03) & .74(.03) & .58(.04) & .43(.06) \\ \hline
\end{tabular}
}
\caption{The upper table shows ACD results in average AUC scores. The lower table shows overall FACD results in average F1$\times$NMI scores. ``Full'' is the complete ACSleuth model. ``w/ $F_G$'' and ``w/ $F_D$'' substitute the MMD-based scorer with the $F_G$ and $F_D$ scoring functions, respectively. ``w/o MB'' removes the memory block from \textit{module I}. ``w/o DA'' omits the domain adaptation phase. ``w/o FB''  uses domain-adapted cell embeddings only for clustering. }
\label{tab:ablation}
\end{table}
\vspace{-0.5cm}

\section{Conclusion}
In this paper, We propose ACSleuth, a novel method for FACD in multi-sample and multi-domain contexts. ACSleuth provides a generative framework that integrates AC detection, domain adaptation, and fine-grained annotation into a methodological cohesive workflow. Our theoretical analysis corroborates ACSleuth's resilience against DS. Extensive experiments, designed for scenarios involving various real datasets and DS types, have shown ACSleuth's superiority over the SOTA methods in identifying and differentiating ACs. Our method is also versatile for tabular data types beyond SC.

\section*{Acknowledgments}
The project is funded by the Excellent Young Scientist Fund of Wuhan City (Grant No. 21129040740) to X.S.




\appendix
\onecolumn
\title{\vspace{-2cm}\textbf{Domain Adaptive and Fine-grained Anomaly Detection for Single-cell Sequencing Data and Beyond} \\
        ~\\
       \textbf{Supplementary Material}}

\author{}
\date{}



\section{Theorem Proofs}
\subsection{Proof of Theorem 3.1}
\begin{proof}\label{pr3.1}
Gretton et al \cite{gretton2012kernel} provide an unbiased empirical MMD for samples:
\begin{equation}\label{eq:MMD_XY}
    \begin{aligned}
        & MMD^2\left(\bm{\delta}_m^x, \bm{\delta}_n^\xi\right)\\
        & = \frac{1}{m(m-1)}\sum_{i}^{m}\sum_{j\neq i}^{m}k(\bm{\delta}_i^x, \bm{\delta}_j^x)
          + \frac{1}{n(n-1)}\sum_{i}^{n}\sum_{j\neq i}^{n}k(\bm{\delta}_i^\xi, \bm{\delta}_j^\xi) - \frac{2}{mn}\sum_{i}^{m}\sum_{j}^{n}k(\bm{\delta}_i^x, \bm{\delta}_j^\xi) \\
        & = \sum_{i}^{m+n}\sum_{j\neq i}^{m+n}k(\bm{\delta}_i, \bm{\delta}_j)\gamma(s_i, s_j),
    \end{aligned}
\end{equation}
where the adjustment coefficient $\gamma$ is defined as:
\begin{equation}
    \gamma(s_i, s_j) =
    \begin{cases}
        \frac{1}{m(m-1)}, & s_i = s_j = 0. \\
        \frac{1}{n(n-1)}, & s_i = s_j = 1. \\
        \frac{-1}{mn}, & s_i \neq s_j.
    \end{cases}
\end{equation}
If $s_i = 1$, instance $i$ is annotated as anomalous, or normal otherwise. 
\end{proof}

\setcounter{section}{1}
\subsection{Proof of Theorem 3.2}\label{proof:gamma_c}
\begin{proof}\label{pr3.2}
We define the two-dimensional sequence $\{\gamma_{m,n}\}_{m,n\in\mathbb{Z}}$ as:
\begin{equation}
    \gamma_{0,0} = \frac{1}{m(m-1)}, \quad \gamma_{0,1} = \gamma_{1,0} = \frac{-1}{mn}, \quad \gamma_{1,1} = \frac{1}{n(n-1)}, \quad \forall i, j \geq 2, \gamma_{i,j} = 0
\end{equation}
The ordinary generating function $H(x, y)$ for $\gamma_{m,n}$ is:
\begin{equation}
    H(x, y) = \sum_{i=0}^{\infty}\sum_{j=0}^{\infty}\gamma_{i,j}x^iy^j,
\end{equation}
where $(x, y) \in \mathbb{D} \coloneqq [0, 1]\times [0, 1]$.
In this case, $H(x, y)$ is a formal power series, and $\gamma_{ij}$ corresponds to the coefficients of $x^iy^j$. Note that $H(-x, -y)$ satisfies:
\begin{equation}
    H(-x, -y) = \sum_{i=0}^{\infty}\sum_{j=0}^{\infty}\gamma_{ij}\Gamma(i+1)\Gamma(j+1)\frac{(-1)^i(-1)^j}{i!j!}x^iy^j.
\end{equation}
By \Cref{le:MultiRam}, which is the extension of Ramanujan's master theorem in the $k$-dimensional case \cite{bradshaw2023operational,amdeberhan2012ramanujan}, the $k$-dimensional Mellin transform follows:
\begin{equation}
    \begin{aligned}
        \mathcal{M}[H(-x, -y)](s, t) & \coloneqq \int_{\mathbb{D}} x^{s-1}y^{t-1} H(-x, -y) dxdy \\
        & = \Gamma(s)\Gamma(t)\Gamma(1-s)\Gamma(1-t) \gamma_c(-s, -t)\\
        & = \frac{\pi^2}{\sin\pi s \sin\pi t} \gamma_c(-s, -t),
    \end{aligned}
\end{equation}
where $\gamma_c(s, t)$ is exactly the extension of sequence $\gamma_{ij}$ in the continuous scenario. By solving the definite integral, we can obtain:
\begin{equation}
    \begin{aligned}
        \gamma_c(-s, -t) & = \frac{\sin\pi s \sin\pi t}{\pi^2} \int_{\mathbb{D}} x^{s-1}y^{t-1} H(-x, -y) dxdy \\
        & = \frac{\sin\pi s \sin\pi t}{\pi^2} \int_{\mathbb{D}} \sum_{i=0}^{\infty}\sum_{j=0}^{\infty}\gamma_{ij}x^{i+s-1}y^{j+t-1} dxdy \\
        & = \frac{\sin\pi s \sin\pi t}{\pi^2} \int_{\mathbb{D}} (\gamma_{0,0}x^{s-1}y^{t-1} + \gamma_{1,0}x^{s}y^{t-1} + \gamma_{0,1}x^{s-1}y^{t} + \gamma_{1,1}x^sy^t) dxdy \\
        & = \frac{\sin\pi s \sin\pi t}{\pi^2} \left(\gamma_{0,0}\frac{x^s}{s}\bigg|_0^1\frac{y^t}{t}\bigg|_0^1 + \gamma_{1,0}\frac{x^{s+1}}{s+1}\bigg|_0^1\frac{y^t}{t}\bigg|_0^1 + \gamma_{0,1}\frac{x^s}{s}\bigg|_0^1\frac{y^{t+1}}{t+1}\bigg|_0^1 + \gamma_{1,1}\frac{x^{s+1}}{s+1}\bigg|_0^1\frac{y^{t+1}}{t+1}\bigg|_0^1 \right) \\
        & = \frac{\sin\pi s \sin\pi t}{\pi^2} \left(\frac{[m(m-1)]^{-1}}{st}+\frac{(mn)^{-1}}{(s+1)t}+\frac{(mn)^{-1}}{s(t+1)}+\frac{[n(n-1)]^{-1}}{(s+1)(t+1)}. \right)
    \end{aligned}
\end{equation}
By replacing $-s$ and $-t$, \Cref{Theorem 3.2} is proven.
\end{proof}

\setcounter{section}{1}
\subsection{Proof of Theorem 3.3}

\begin{proof}\label{pr3.3}
Let $S^r$ and $S^t$ denote the sample sets in reference domain $r$  and target domain $t$, respectively. $\bm{\zeta} \in S^r$ denote a reference sample. Given a proper reconstruction function $G(\cdot)$ trained on $S^r$ exclusively, under Assumption 3.1 in the main text, $\bm{\zeta}$ and its reconstruction $\widehat{\bm{\zeta}}$ satisfy: 
\begin{equation}
    \begin{cases}
        & \bm{\zeta} = \bm{\zeta}^* + \bm{b}^r + \bm{\epsilon}, \\
        &  \widehat{\bm{\zeta}} \coloneqq G(\bm{\zeta}) = \widehat{\bm{\zeta}}^* + \bm{b}^r,
    \end{cases}
\end{equation}
where $\widehat{\bm{\zeta}}^* \sim P_{\widehat{\bm{\zeta}}^*}, \bm{b}^r \sim P_{\bm{b}^r}$. The random error term $\bm{\epsilon}$ is ignored in  $\widehat{\bm{\zeta}}$, as it cannot be learned by $G(\cdot)$. Given an inlier $\bm{x}_i$ and anomaly $\bm{\xi}_j $ in $S^t$, $\exists \bm{\zeta}_i, \bm{\zeta}_j \in S^r$ whose reconstructions by $G(\cdot)$ are equivalent to those of $\bm{x}_i$ and $\bm{\xi}_j $ :
\begin{equation}
    \begin{cases}
        & \bm{x}_i = \bm{x}_i^* + \bm{b}^t + \bm{\epsilon}_i, \\
        & \widehat{\bm{x}}_i = G(\bm{x}_i) = G(\bm{\zeta}_i) = \widehat{\bm{\zeta}}_i^* + \bm{b}^r, \\
        & \bm{\xi}_j = \bm{\xi}_j^* + \bm{b}^t + \bm{\epsilon}_j, \\
        & \widehat{\bm{\xi}}_j = G(\bm{\xi}_j) = G(\bm{\zeta}_j) = \widehat{\bm{\zeta}}_j^* + \bm{b}^r ,
    \end{cases}
\end{equation}
The reconstruction deviations of $\bm{x}_i$ and $\bm{\xi}_j $ satisfy:
\begin{equation}
    \begin{cases}
        & \bm{\delta}_i^x = \bm{x}_i - \widehat{\bm{x}}_i = \bm{x}_i^* - \widehat{\bm{\zeta}}_i^* + \bm{b}^t - \bm{b}^r + \bm{\epsilon}_i, \\
        & \bm{\delta}_j^\xi = \bm{\xi}_j - \widehat{\bm{\xi}}_j = \bm{\xi}_j^* - \widehat{\bm{\zeta}}_j^* + \bm{b}^t - \bm{b}^r + \bm{\epsilon}_j.
    \end{cases}
\end{equation}
For a more concise representation, we define:
\begin{equation}
    \begin{cases}
        & \bm{\delta}_i^{x*} = \bm{x}_i^* - \widehat{\bm{\zeta}}_i^*, \quad \bm{\delta}_i^b = \bm{b}^t - \bm{b}^r + \bm{\epsilon}_i. \\
        & \bm{\delta}_j^{\xi*} = \bm{\xi}_j^* - \widehat{\bm{\zeta}}_j^*, \quad \bm{\delta}_j^b = \bm{b}^t - \bm{b}^r + \bm{\epsilon}_j.
    \end{cases}
\end{equation}
Given a linear kernel $k$, it can be expanded as follows:
\begin{equation}
    \begin{aligned}
        & k(\bm{\delta}_i^x, \bm{\delta}_j^x) = k(\bm{\delta}_i^{x*}, \bm{\delta}_j^{x*}) + k(\bm{\delta}_i^b, \bm{\delta}_j^b) + k(\bm{\delta}_i^{x*}, \bm{\delta}_j^b) + k(\bm{\delta}_i^b, \bm{\delta}_j^{x*}), \\
        & k(\bm{\delta}_i^\xi, \bm{\delta}_j^\xi) = k(\bm{\delta}_i^{\xi*}, \bm{\delta}_j^{\xi*}) + k(\bm{\delta}_i^b, \bm{\delta}_j^b) + k(\bm{\delta}_i^{\xi*}, \bm{\delta}_j^b) + k(\bm{\delta}_i^b, \bm{\delta}_j^{\xi*}), \\
        & k(\bm{\delta}_i^x, \bm{\delta}_j^\xi) = k(\bm{\delta}_i^{x*}, \bm{\delta}_j^{\xi*}) + k(\bm{\delta}_i^b, \bm{\delta}_j^b) + k(\bm{\delta}_i^{x*}, \bm{\delta}_j^b) + k(\bm{\delta}_i^b, \bm{\delta}_j^{\xi*}).
    \end{aligned}
\end{equation}
Following \eqref{eq:MMD_XY}, we have:
\begin{equation}\label{eq:MMDMMD}
    \begin{aligned}
        & MMD^2\left(\bm{\delta}_m^x, \bm{\delta}_n^\xi\right) \\
        & = \frac{1}{m(m-1)}\sum_{i}^{m}\sum_{j\neq i}^{m}k(\bm{\delta}_i^x, \bm{\delta}_j^x) + \frac{1}{n(n-1)}\sum_{i}^{n}\sum_{j\neq i}^{n}k(\bm{\delta}_i^\xi, \bm{\delta}_j^\xi) - \frac{2}{mn}\sum_{i}^{m}\sum_{j}^{n}k(\bm{\delta}_i^x, \bm{\delta}_j^\xi) \\
        & = MMD^2\left(\bm{\delta}_m^{x*}, \bm{\delta}_n^{\xi*}\right) + MMD^2\left(\bm{\delta}_m^b, \bm{\delta}_n^b\right) + 2R_{mn}^x + 2R_{mn}^\xi.
    \end{aligned}
\end{equation}
where $\bm{\delta}^{\cdot}_m=\{\bm{\delta}^{\cdot}_i | \bm{\delta}^{\cdot}_i\in \mathbb{R}^d,\cdot \in \{x,x*,b\},i=1,2,\cdots,m\}$ and $\bm{\delta}^{\cdot}_n=\{\bm{\delta}^{\cdot}_j | \bm{\delta}^{\cdot}_j\in \mathbb{R}^d,\cdot \in \{\xi,\xi*,b\},j=1,2,\cdots,n\}$. The remainder terms $R_{mn}^x$ and $R_{mn}^\xi$ can be expressed as:
\begin{equation}
    \begin{aligned}
        R_{mn}^x & \coloneqq \frac{1}{m(m-1)}\sum_{i}^{m}\sum_{j\neq i}^{m}{\bm{\delta}_i^b} \cdot \bm{\delta}_j^{x*} - \frac{1}{n^2}\sum_{i}^{n}\sum_{j}^{n}{\bm{\delta}_i^b} \cdot \bm{\delta}_j^{x*}, \\
        R_{mn}^\xi & \coloneqq \frac{1}{n(n-1)}\sum_{i}^{m}\sum_{j\neq i}^{m}{\bm{\delta}_i^b} \cdot \bm{\delta}_j^{\xi*} - \frac{1}{m^2}\sum_{i}^{m}\sum_{j}^{m}{\bm{\delta}_i^b} \cdot \bm{\delta}_j^{\xi*}.
    \end{aligned}
\end{equation}

Next, we discuss the asymptotic convergence property of each term in equation \eqref{eq:MMDMMD}.
For the first and second terms, following \Cref{le:MMDcon} \cite{gretton2012kernel}, we have:
\begin{equation}\label{eq:1st2nd}
    \begin{gathered}
        \mathbb{P}\left(\big|MMD^2(\bm{\delta}_m^{x*}, \bm{\delta}_n^{\xi*}) - MMD^2(P_{\bm{\delta}^{x*}}, P_{\bm{\delta}^{\xi*}}) \big| \geq \varepsilon \right) \leq 2\exp\left(\frac{-Cn\varepsilon^2}{8(1+C){K_+^x}^2}\right), \\
        \mathbb{P}\left(\big|MMD^2(\bm{\delta}_m^{b}, \bm{\delta}_n^{b}) - 0 \big| \geq \varepsilon \right) \leq 2\exp\left(\frac{-Cn\varepsilon^2}{8(1+C){K_+^b}^2}\right),
    \end{gathered}
\end{equation}
where $K_+^x \coloneqq \sup_{ij} k(\bm{\delta}_i^{x*}, \bm{\delta}_j^{x*})$, $K_+^b \coloneqq \sup_{ij} k(\bm{\delta}_i^{b}, \bm{\delta}_j^{b})$.

We then discuss the asymptotic convergence property of $R_{mn}^x$ in \eqref{eq:MMDMMD}. Given $\bm{\delta}^b$ and $\bm{\delta}^{x*}$ are independent, we define a random variable $y \coloneqq {\bm{\delta}^b} \cdot \bm{\delta}^{x*} \in \mathbb{R}$. According to \Cref{le:1/nepsilon} and Hoeffding's Inequality \cite{hoeffding1994probability}, $R_{mn}^x$ is asymptotically bounded as shown below:
\begin{equation}\label{eq:3rd}
    \begin{aligned}
        \mathbb{P}(\big|R_{mn}^x\big| \geq \varepsilon) & = \mathbb{P}\left(\Bigg| \frac{1}{m(m-1)}\sum_{i=1}^{m(m-1)}y_i - \frac{1}{n^2}\sum_{j=1}^{n^2}y_j\Bigg| \geq \varepsilon\right) \\
        & = \mathbb{P}\left(\Bigg| \frac{1}{m(m-1)}\sum_{i=1}^{m(m-1)}y_i - \mathbb{E}(y) + \mathbb{E}(y) - \frac{1}{n^2}\sum_{j=1}^{n^2}y_j\Bigg| \geq \varepsilon\right) \\
        & \leq \mathbb{P}\left(\Bigg| \frac{1}{m(m-1)}\sum_{i=1}^{m(m-1)}y_i - \mathbb{E}(y) \Bigg| \geq \frac{\varepsilon}{2}\right) + \mathbb{P}\left(\Bigg| \frac{1}{n^2}\sum_{j=1}^{n^2}y_j - \mathbb{E}(y) \Bigg| \geq \frac{\varepsilon}{2}\right) \\
        & \leq 2\exp\left(\frac{-2m(m-1)(\varepsilon/2)^2}{{K_+^y}^2}\right) + 2\exp\left(\frac{-2n^2(\varepsilon/2)^2}{{K_+^y}^2}\right) \\
        & \leq 4\exp\left(\frac{-m(m-1)\varepsilon^2}{2{K_+^y}^2}\right),
    \end{aligned}
\end{equation}
where $K_+^y \coloneqq \sup_{i,j} (y_i - y_j)$. Similarly, for $R_{mn}^\xi$ in equation \eqref{eq:MMDMMD}, we define $\theta \coloneqq {\bm{\delta}^b} \cdot \bm{\delta}^{\xi*} \in \mathbb{R}$ and $K_+^\theta \coloneqq \sup_{i,j} (\theta_i - \theta_j)$. Then, $R_{mn}^\xi$ is asymptotically bounded as follows:
\begin{equation}\label{eq:4th}
    \mathbb{P}(\big|R_{mn}^\xi\big| \geq \varepsilon) \leq 4\exp\left(\frac{-m^2\varepsilon^2}{2{K_+^\theta}^2}\right).
\end{equation}
By \Cref{le:1/nepsilon} and equations \eqref{eq:1st2nd}, \eqref{eq:3rd}, and \eqref{eq:4th}, we have:
\begin{equation}
    \begin{aligned}
        & \mathbb{P}\left(\big| MMD^2(\bm{\delta}_m^x, \bm{\delta}_n^\xi) - MMD^2(P_{\bm{\delta}^{x*}}, P_{\bm{\delta}^{\xi*}}) \big| \geq \varepsilon \right) \\
        & \leq \mathbb{P}\left(\big|MMD^2(\bm{\delta}_m^{x*}, \bm{\delta}_n^{\xi*}) - MMD^2(P_{\bm{\delta}^{x*}}, P_{\bm{\delta}^{\xi*}}) \big| \geq \frac{\varepsilon}{4} \right) \\
        & \quad + \mathbb{P}\left(\big|MMD^2(\bm{\delta}_m^{b}, \bm{\delta}_n^{b})\big| \geq \frac{\varepsilon}{4} \right) + \mathbb{P}(\big|2R_{mn}^x\big| \geq \frac{\varepsilon}{4}) + \mathbb{P}(\big|2R_{mn}^\xi\big| \geq \frac{\varepsilon}{4}) \\
        & \leq 4\exp\left(\frac{-Cn\varepsilon^2}{128(1+C)K_+^2}\right) + 4\exp\left(\frac{-m(m-1)\varepsilon^2}{128K_+^2}\right) + 4\exp\left(\frac{-m^2\varepsilon^2}{128K_+^2}\right),
    \end{aligned}
\end{equation}
where $K_+ \coloneqq \max\{K_+^x, K_+^b, K_+^\xi, K_+^\theta\}$. When $m > 1 + (1+C)^{-1}$, the following inequality always holds\footnote{In fact, this condition is always satisfied as long as there are more than only two normal samples.}:
\begin{equation}
    \frac{Cn}{1+C} < m(m-1) < m^2.
\end{equation}
Finally, we have:
\begin{equation}
    \mathbb{P}\left(\big| MMD^2(\bm{\delta}_m^x, \bm{\delta}_n^\xi) - MMD^2(P_{\bm{\delta}^{x*}}, P_{\bm{\delta}^{\xi*}}) \big| \geq \varepsilon \right) \leq 12\exp\left(\frac{-Cn\varepsilon^2}{128(1+C)K_+^2}\right).
\end{equation}
If $\alpha\coloneqq12, \beta\coloneqq(128K_+^2)^{-1}$, \Cref{Theorem 3.3} is proven.
\end{proof}

\setcounter{section}{1}
\subsection{Lemmas}
\begin{lemma}[$k$-dimensional Ramanujan's master theorem \cite{bradshaw2023operational,amdeberhan2012ramanujan}]\label{le:MultiRam}
    If a complex-valued function $f(x_1, \cdots, x_k)$ has an expansion:
    \begin{equation}
        f(x_1, \cdots, x_k) = \sum_{n_1, \cdots, n_k}^{\infty} g(n_1, \cdots, n_k) \prod_{i=1}^k \frac{(-1)^{n_i}}{n_i!} x_i^{n_i},
    \end{equation}
    where $g(n_1, \cdots, n_k)$ is a continuously analytic function everywhere, then the $k$-dimensional Mellin transform satisfies a multivariate version of Ramanujan's master theorem as follows:
    \begin{equation}
        \begin{aligned}
            \mathcal{M}[f(x_1, \cdots, x_k)](s_1, \cdots, s_k) & \coloneqq \int_{\mathbb{R}_+^k} \prod_{i=1}^{k} x_i^{s_i-1} f(x_1, \cdots, x_k) dx_1 \cdots dx_k \\
            & = \prod_{i=1}^{k} \Gamma(s_i)g(-s_1, \cdots, -s_k).
        \end{aligned}
    \end{equation}
    The integral is convergent when $0 < Re(s_i) <1, \forall i \in \{1,\cdots,k\}$.
\end{lemma}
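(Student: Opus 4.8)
The plan is to prove the statement by induction on the dimension $k$, using the classical one-dimensional Ramanujan master theorem---namely that $\int_0^\infty x^{s-1}\sum_{n\ge 0}\frac{(-1)^n}{n!}\varphi(n)x^n\,dx=\Gamma(s)\varphi(-s)$ for $0<\mathrm{Re}(s)<1$ whenever $\varphi$ admits a suitable analytic extension---as the base case $k=1$; this is exactly the cited result \cite{amdeberhan2012ramanujan,bradshaw2023operational}. The whole argument then rests on peeling off one variable at a time while preserving the Ramanujan form, so that the $\Gamma$-factor and the sign flip $n_i\mapsto -s_i$ accumulate coordinatewise.

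For the inductive step, assume the identity in dimension $k-1$ and write $\mathbf{x}'=(x_2,\dots,x_k)$. First I would regroup the defining series by isolating the $x_1$-dependence,
$$f(x_1,\mathbf{x}')=\sum_{n_1=0}^{\infty}\frac{(-1)^{n_1}}{n_1!}\,x_1^{n_1}\,\varphi_{n_1}(\mathbf{x}'),\qquad \varphi_{n_1}(\mathbf{x}')=\sum_{n_2,\dots,n_k}g(n_1,\dots,n_k)\prod_{i=2}^{k}\frac{(-1)^{n_i}}{n_i!}x_i^{n_i}.$$
This exhibits $f$ as a one-dimensional Ramanujan series in $x_1$ whose coefficient sequence is the $\mathbf{x}'$-parametrised family $n_1\mapsto\varphi_{n_1}(\mathbf{x}')$. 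Applying the $k=1$ theorem in the variable $x_1$ then yields
$$\int_0^\infty x_1^{s_1-1}f(x_1,\mathbf{x}')\,dx_1=\Gamma(s_1)\,\widetilde{\varphi}_{-s_1}(\mathbf{x}'),\qquad \widetilde{\varphi}_{-s_1}(\mathbf{x}')=\sum_{n_2,\dots,n_k}g(-s_1,n_2,\dots,n_k)\prod_{i=2}^{k}\frac{(-1)^{n_i}}{n_i!}x_i^{n_i},$$
where the analytic interpolation at $n_1=-s_1$ is inherited from the joint analyticity of $g$. The key point is that $\widetilde{\varphi}_{-s_1}$ is again in Ramanujan form in the remaining $k-1$ variables, with coefficient function $(n_2,\dots,n_k)\mapsto g(-s_1,n_2,\dots,n_k)$, which is still analytic everywhere. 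The induction hypothesis then gives $\int_{\mathbb{R}_+^{k-1}}\prod_{i=2}^{k}x_i^{s_i-1}\widetilde{\varphi}_{-s_1}(\mathbf{x}')\,d\mathbf{x}'=\prod_{i=2}^{k}\Gamma(s_i)\,g(-s_1,-s_2,\dots,-s_k)$, and assembling the iterated integral via Fubini produces the claimed $\prod_{i=1}^{k}\Gamma(s_i)\,g(-s_1,\dots,-s_k)$.

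The hard part will be the analytic and measure-theoretic bookkeeping that makes the peeling step legitimate. Two interchanges must be justified within the stated window $0<\mathrm{Re}(s_i)<1$: (i) the use of Fubini to evaluate the $k$-fold Mellin integral as an iterated one, which requires absolute integrability of $\prod_i x_i^{s_i-1}f$ over $\mathbb{R}_+^k$; and (ii) commuting the analytic interpolation in $n_1$---the mechanism behind the one-dimensional theorem---with the summation over $n_2,\dots,n_k$, so that interpolating $\varphi_{n_1}(\mathbf{x}')$ at $n_1=-s_1$ indeed reproduces the termwise-interpolated series $\widetilde{\varphi}_{-s_1}(\mathbf{x}')$. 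I would control both by imposing Hardy-type growth bounds on $g$ (analytic continuation to a product of half-planes with controlled exponential growth, i.e. a multivariate Carlson condition) that hold uniformly in the auxiliary variables on compact subsets, so that the parametrised family $n_1\mapsto\varphi_{n_1}(\mathbf{x}')$ satisfies the hypotheses of the $k=1$ theorem with estimates uniform in $\mathbf{x}'$, after which dominated convergence secures the interchange. With these conditions in force the induction closes cleanly; without them the interpolation step is exactly where a careless argument would break, since naive term-by-term integration of the monomials $x_i^{n_i}$ diverges on $(0,\infty)$ and the genuine content of the theorem lies precisely in the interpolation rather than in formal summation.
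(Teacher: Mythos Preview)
The paper does not prove this lemma at all: it is stated in the appendix as a cited result from \cite{bradshaw2023operational,amdeberhan2012ramanujan} and then invoked as a black box in the proof of Theorem~3.2. There is therefore no proof in the paper to compare your proposal against.

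That said, your inductive peeling-off-one-variable strategy is the standard way to lift the one-dimensional Ramanujan master theorem to $k$ variables, and it is essentially how the cited references proceed. Your identification of the two delicate points---Fubini on $\mathbb{R}_+^k$ and commuting the analytic interpolation in $n_1$ with the $(k-1)$-fold summation---is accurate, and your proposed remedy (uniform Hardy/Carlson-type growth bounds on $g$ in a product of half-planes) is exactly what is needed to make the induction close. One small caution: the lemma as stated in the paper only assumes $g$ is ``continuously analytic everywhere'', which by itself is not enough to guarantee convergence of the Mellin integral or uniqueness of the interpolation; the growth hypotheses you rightly insist on are an implicit strengthening of the stated hypothesis, so if you were writing this out you would want to flag that the lemma's hypotheses are being read in the stronger Hardy sense rather than literally.
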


\begin{lemma}[Asymptotic convergence of MMD \cite{gretton2012kernel}]\label{le:MMDcon}
    Given two sample sets $\bm{X}=\{\bm{x}_i|\bm{x}_i\sim p\}$, $\bm{Y}=\{\bm{y}_j|\bm{y}_j\sim q\}$, where $p$ and $q$ are the underlying distributions, respectively. Assume $0 \leq k(\bm{x}_i, \bm{x}_j) \leq K $. We have:
    \begin{equation}
        \mathbb{P}\left(\big|MMD^2(\bm{X}, \bm{Y}) - MMD^2(p, q)\big| \geq \varepsilon \right) \leq 2\exp\left(\frac{-\varepsilon^2mn}{8K^2(m+n)}\right),
    \end{equation}
\end{lemma}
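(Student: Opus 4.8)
The plan is to regard the empirical quantity $MMD^2(\bm{X},\bm{Y})$ as a deterministic function $h(\bm{x}_1,\dots,\bm{x}_m,\bm{y}_1,\dots,\bm{y}_n)$ of the $m+n$ independent draws and to obtain the tail bound from McDiarmid's bounded-differences inequality. Concretely I would take $h$ to be the unbiased U-statistic estimator of $MMD^2$, i.e. the one appearing in \eqref{eq:MMD_XY} with the two within-sample sums normalized by $\tfrac{1}{m(m-1)}$ and $\tfrac{1}{n(n-1)}$, so that its expectation equals the population value exactly. This choice is what lets the concentration be stated directly around $MMD^2(p,q)$ rather than around a shifted (biased) mean.

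The first step is to verify unbiasedness. Taking expectations term by term and using that the samples are i.i.d.\ and distinct within each sum, the three normalized double sums reproduce $\mathbb{E}_{\bm{x},\bm{x}'\sim p}[k(\bm{x},\bm{x}')] + \mathbb{E}_{\bm{y},\bm{y}'\sim q}[k(\bm{y},\bm{y}')] - 2\,\mathbb{E}_{\bm{x}\sim p,\bm{y}\sim q}[k(\bm{x},\bm{y})]$, which equals $\|\mu_p-\mu_q\|_{\mathcal{H}}^2 = MMD^2(p,q)$ for the kernel mean embeddings $\mu_p,\mu_q$. Hence $\mathbb{E}[h]=MMD^2(p,q)$, and it remains only to concentrate $h$ about its mean.

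The technical core is the bounded-differences bookkeeping. I would fix all samples but one and bound the effect of replacing a single $\bm{x}_i$ by $\bm{x}_i'$. Only the $2(m-1)$ cross terms $k(\bm{x}_i,\bm{x}_j)$, $j\neq i$, in the first sum and the $n$ terms $k(\bm{x}_i,\bm{y}_j)$ in the mixed sum are affected; since each kernel value lies in $[0,K]$ and carries weight $\tfrac{1}{m(m-1)}$ and $\tfrac{2}{mn}$ respectively, the statistic changes by at most $\tfrac{2(m-1)K}{m(m-1)} + \tfrac{2nK}{mn} = \tfrac{4K}{m} =: c_x$. By the symmetric computation each $\bm{y}_j$ contributes constant $c_y=\tfrac{4K}{n}$.

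Finally I would apply McDiarmid's inequality, $\mathbb{P}(|h-\mathbb{E}h|\ge\varepsilon)\le 2\exp\!\big(-2\varepsilon^2/\textstyle\sum_i c_i^2\big)$, with $\sum_i c_i^2 = m\,(4K/m)^2 + n\,(4K/n)^2 = 16K^2(m+n)/(mn)$; substituting gives the exponent $-\varepsilon^2 mn /\big(8K^2(m+n)\big)$, which is exactly the claimed bound. I expect the main obstacle to be the counting in the bounded-differences step: getting the per-coordinate constant tight at $4K/m$ rather than a loose multiple, and keeping the biased-versus-unbiased distinction straight so that the centering lands precisely on $MMD^2(p,q)$ and the $K^2$ (rather than $K$) in the denominator is justified by working with the squared statistic.
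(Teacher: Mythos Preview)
Your argument is correct and is precisely the standard McDiarmid/bounded-differences proof of this concentration bound; the per-coordinate constants $4K/m$ and $4K/n$ and the resulting exponent are computed accurately. Note, however, that the paper does not supply its own proof of this lemma: it is stated in the appendix purely as a cited result from \cite{gretton2012kernel} and then invoked as a black box in the proof of Theorem~3.3, so there is no paper-side argument to compare against beyond the original reference, which uses the same approach you outline.
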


\begin{lemma}\label{le:1/nepsilon}
    For any random variables $x_1, x_2, \cdots, x_k \in \mathbb{R}$, they always satisfy:
    \begin{equation}
        \mathbb{P}\left(\Bigg|\sum_{i=1}^{k}x_i\Bigg| \geq \varepsilon\right) \leq \mathbb{P}\left(\sum_{i=1}^{k}|x_i| \geq \varepsilon\right) \leq \sum_{i=1}^{k}\mathbb{P}\left(|x_i| \geq \frac{\varepsilon}{k}\right),
    \end{equation}
    where $\varepsilon \geq 0$, $k \in \mathbb{Z}_+$.
\end{lemma}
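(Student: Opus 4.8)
The plan is to prove the two inequalities separately, each by an elementary \emph{event-inclusion} argument followed by the appropriate property of the probability measure (monotonicity for the first, finite subadditivity for the second). Crucially, no independence, moment, or distributional assumptions on the $x_i$ are required; the argument is purely set-theoretic at the level of outcomes, which is exactly what makes the lemma freely applicable inside the proof of \Cref{th:trans} (where it is invoked to split compound deviation events).

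For the left inequality I would begin from the pointwise triangle inequality $\left|\sum_{i=1}^{k} x_i\right| \leq \sum_{i=1}^{k} |x_i|$, which holds surely, i.e.\ for every outcome. Hence any outcome satisfying $\left|\sum_{i} x_i\right| \geq \varepsilon$ also satisfies $\sum_{i} |x_i| \geq \varepsilon$, giving the deterministic inclusion of events $\{\left|\sum_{i} x_i\right| \geq \varepsilon\} \subseteq \{\sum_{i} |x_i| \geq \varepsilon\}$. Monotonicity of $\mathbb{P}$ under set inclusion then immediately yields $\mathbb{P}(\left|\sum_{i} x_i\right| \geq \varepsilon) \leq \mathbb{P}(\sum_{i} |x_i| \geq \varepsilon)$.

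For the right inequality I would use a short contrapositive (averaging) step: if $|x_i| < \varepsilon/k$ held simultaneously for all $i$, then summing gives $\sum_{i} |x_i| < k\cdot(\varepsilon/k) = \varepsilon$. Therefore the event $\{\sum_{i} |x_i| \geq \varepsilon\}$ forces at least one index $i$ with $|x_i| \geq \varepsilon/k$, establishing the inclusion $\{\sum_{i} |x_i| \geq \varepsilon\} \subseteq \bigcup_{i=1}^{k} \{|x_i| \geq \varepsilon/k\}$. Applying finite subadditivity (the union bound) of $\mathbb{P}$ to the right-hand union then produces $\sum_{i=1}^{k} \mathbb{P}(|x_i| \geq \varepsilon/k)$, completing the chain of inequalities.

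There is no genuine analytical obstacle here; the only points requiring a little care are bookkeeping ones. First, the degenerate case $\varepsilon = 0$ should be noted as trivial: the averaging step uses strict inequalities, and when $\varepsilon = 0$ the bound $1 \leq k$ holds vacuously, so I would dispatch it as a special case rather than let it confuse the contrapositive. Second, I would emphasize that \emph{all} the inclusions above are outcome-wise (deterministic) statements, so beyond the standing assumption that each $\{|x_i| \geq t\}$ is a measurable event nothing further is needed, and that the finiteness of $k$ is what makes the constant in the union bound exactly $k$, matching the factor $\varepsilon/k$ in the thresholds used downstream.
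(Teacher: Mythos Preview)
Your proposal is correct and follows essentially the same approach as the paper: triangle inequality plus monotonicity for the first inclusion, and the contrapositive averaging argument plus the union bound for the second. The paper's version is terser and omits your remarks on the $\varepsilon=0$ edge case and measurability, but the underlying event-inclusion steps are identical.
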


\begin{proof}
According to the triangle inequality, we have:
\begin{equation}
    \Bigg|\sum_{i=1}^{k}x_i\Bigg| \leq \sum_{i=1}^{k}|x_i|,
\end{equation}
which also indicates
\begin{equation}
    \left\{\Bigg|\sum_{i=1}^{k}x_i\Bigg| \geq \varepsilon\right\} \subset \left\{\sum_{i=1}^{k}|x_i| \geq \varepsilon\right\}.
\end{equation}
The following equation always holds:
\begin{equation}
    \left\{\sum_{i=1}^{k}|x_i| \geq \varepsilon\right\} \subset \bigcup_{i=1}^k \left\{|x_i| \geq \frac{\varepsilon}{k}\right\}.
\end{equation}
Thus, we have:
\begin{equation}
    \mathbb{P}\left(\Bigg|\sum_{i=1}^{k}x_i\Bigg| \geq \varepsilon\right) \leq \mathbb{P}\left(\sum_{i=1}^{k}|x_i| \geq \varepsilon\right) \leq \sum_{i=1}^{k}\mathbb{P}\left(|x_i| \geq \frac{\varepsilon}{k}\right).
\end{equation}

\end{proof}

\section{Automatically Infer Anomaly Cluster Numbers}\label{clustnum}
In case the true number of anomaly subtypes is unknown, ACSleuth can estimate this number based on a cell-cell similarity matrix calculated on fused anomalous cell embeddings (see Section 3.3 in the main text)\cite{Clunum}. Initially, give the fused anomalous cell embedding matrix $ \Xi \in \mathbb{R}^{N_{an} \times p}$, where $N_{an}$ denotes the anomaly cell numbers and $p$ the embedding dimensions, is subjected to row normalization followed by the calculation of a cell-cell similarity matrix $\mathbb{S}\in\mathbb{R}^{N_{an} \times N_{an}}$ as:
\begin{equation}
    \mathbb{S} = \frac{\Xi\Xi^T}{\max\left\{\Xi\Xi^T\right\}}.
\end{equation}
Next, $\mathbb{S}$ is transformed to a graph Laplacian matrix $\mathbb{L}\in\mathbb{R}^{N_{an}\times N_{an}}$ as:
\begin{equation}
    \mathbb{S}^\prime=\mathbb{S}+\mathbb{S}^2,
\end{equation}
\begin{equation}
    \mathbb{L}=\mathbb{D}^{-\frac{1}{2}}\mathbb{S}^\prime\mathbb{D}^{-\frac{1}{2}}.
\end{equation}
Here, $\mathbb{D}\in\mathbb{R}^{N_{an} \times N_{an}}$ represents the degree matrix of $\mathbb{S}$, and $\mathbb{S}^\prime$ aims to enhance the similarity structure. Then, the eigenvalues of $\mathbb{L}$ are ranked as $\lambda_{\left(1\right)}\le\lambda_{\left(2\right)}\le\cdots\le\lambda_{\left(N_{an}\right)}$. Finally, the number of subtypes can be inferred as:
\begin{equation}
    n_{inf}={\rm argmax}_i\left\{\lambda_{\left(i\right)}-\lambda_{\left(i-1\right)}\right\},\ \ \ \ i=2,\ 3,\ \cdots,\ N_{an}.
\end{equation}

\section{Datasets}\label{dataset}
\subsection{Datasets Detailed Information}
\begin{table}[H]
    \renewcommand{\arraystretch}{1.25}  
    \centering
    \setlength{\tabcolsep}{1.5pt}  
    \resizebox{\linewidth}{!}{
    \begin{tabular}{|c|c|c|c|c|}
    \hline
    \textbf{Data Source} & \textbf{Dataset Name} & \textbf{\#Instance} & \textbf{\makecell{Sequencing 
Technology}} &\textbf{Anomaly Ratio (\%)} \\
    \hline
    \multirow{2}{*}{TME \cite{TME}} & a-TME-$0$ & 3559 & \multirow{2}{*}{10x} & - \\
    & a-TME-$1$ & 2968 & & 60.24 \\
    \hline
    \multirow{3}{*}{Brain \cite{Brain}} & a-Brain-$0$ & 3203 & \multirow{3}{*}{Hi-Seq} & - \\
    & a-Brain-$1$ & 3448 & & 39.07 \\
    & a-Brain-$2$ & 3750 & & 7.31 \\
    \hline
    \multirow{3}{*}{PBMC \cite{PBMC}} & r-PBMC-$0$ & 4698 & \multirow{3}{*}{10x} & - \\
    & r-PBMC-$1$ & 3684 & & 13.14 \\
    & r-PBMC-$2$ & 3253 & & 12.73 \\
    \hline
    \multirow{3}{*}{Cancer \cite{Cancer}}  & r-Cancer-$0$ & 8104 & \multirow{3}{*}{10x} & - \\
    & r-Cancer-$1$ & 7721 & & 50.03 \\
    & r-Cancer-$2$ & 4950 & & 58.12 \\
    \hline
    \multirow{4}{*}{CSCC \cite{CSCC}} & r-CSCC-$0$ & 9242 & \multirow{4}{*}{10x} & - \\
    & r-CSCC-$1$ & 1409 & & 73.46 \\
    & r-CSCC-$2$ & 2681 & & 22.68 \\
    & r-CSCC-$3$ & 2091 & & 32.52 \\
    \hline
    \multirow{3}{*}{Pancreas \cite{Pancreas}}  & r-Pan-$0$ & 7010 & InDrop & - \\
    & r-Pan-$1$ & 3514 & Smart-Seq2 & 8.51 \\
    & r-Pan-$2$ & 3072 & CEL-Seq2 & 13.41 \\
    \hline
    \multirow{3}{*}{KDDRev \cite{KDD}} & KDDRev-$0$ & 7278 & \multirow{3}{*}{-} & - \\
    & KDDRev-$1$ & 5832 & & 12.00 \\
    & KDDRev-$2$ & 815 & & 87.48 \\
    \hline
    \end{tabular}
}
    \caption{Datasets detailed information. The dataset name is formatted as (data type)-(data name)-(dataset ID). For data type, a and r denote scATAC-seq and scRNA-seq, respectively. Dataset ID $0$ is reserved for reference datasets. \#Instance denotes the number of instances in each dataset, and the anomaly ratio is the proportion of true anomalies.}
    \label{tab:dataset}
\end{table}
\subsection{KDDCUP-Rev Dataset}
 The KDD-Rev dataset is derived from the KDDCUP99 10 percent dataset, which contains 34 continuous and 7 categorical attributes. Domain shifts exist between records of different connection protocol types. In our settings, the reference dataset only includes UDP records, while the two target datasets contain ICMP and TCP records. There are four types of cyber-intrusions: DOS, R2L, U2R, and PROBING. 


\begin{thebibliography}{}

\bibitem[\protect\citeauthoryear{Aevermann \bgroup \em et al.\egroup }{2018}]{Meaning}
Brian~D. Aevermann, Mark Novotny, Trygve~E Bakken, Jeremy~A. Miller, Alexander~D. Diehl, David Osumi-Sutherland, Roger~S. Lasken, Ed~S. Lein, and Richard~H. Scheuermann.
\newblock Cell type discovery using single-cell transcriptomics: implications for ontological representation.
\newblock {\em Human Molecular Genetics}, 27:R40 -- R47, 2018.

\bibitem[\protect\citeauthoryear{Alquicira-Hernandez \bgroup \em et al.\egroup }{2019}]{alquicira2019scpred}
Jose Alquicira-Hernandez, Anuja Sathe, Hanlee~P Ji, Quan Nguyen, and Joseph~E Powell.
\newblock scpred: accurate supervised method for cell-type classification from single-cell rna-seq data.
\newblock {\em Genome Biology}, 20(1):1--17, 2019.

\bibitem[\protect\citeauthoryear{Amdeberhan \bgroup \em et al.\egroup }{2012}]{amdeberhan2012ramanujan}
Tewodros Amdeberhan, Olivier Espinosa, Ivan Gonzalez, Marshall Harrison, Victor~H Moll, and Armin Straub.
\newblock Ramanujan's master theorem.
\newblock {\em The Ramanujan Journal}, 29:103--120, 2012.

\bibitem[\protect\citeauthoryear{Baron \bgroup \em et al.\egroup }{2016}]{Pancreas}
Maayan Baron, Adrian Veres, Samuel~L Wolock, Aubrey~L Faust, Renaud Gaujoux, Amedeo Vetere, Jennifer~Hyoje Ryu, Bridget~K Wagner, Shai~S Shen-Orr, Allon~M Klein, et~al.
\newblock A single-cell transcriptomic map of the human and mouse pancreas reveals inter-and intra-cell population structure.
\newblock {\em Cell systems}, 3(4):346--360, 2016.

\bibitem[\protect\citeauthoryear{Bischoff \bgroup \em et al.\egroup }{2022}]{Cancer}
Philip Bischoff, Alexandra Trinks, Jennifer Wiederspahn, Benedikt Obermayer, Jan~Patrick Pett, Philipp Jurmeister, Aron Elsner, Tomasz Dziodzio, Jens-Carsten R{\"u}ckert, Jens Neudecker, et~al.
\newblock The single-cell transcriptional landscape of lung carcinoid tumors.
\newblock {\em International Journal of Cancer}, 150(12):2058--2071, 2022.

\bibitem[\protect\citeauthoryear{Bradshaw and Vignat}{2023}]{bradshaw2023operational}
Zachary~P Bradshaw and Christophe Vignat.
\newblock An operational calculus generalization of ramanujan's master theorem.
\newblock {\em Journal of Mathematical Analysis and Applications}, 523(2):127029, 2023.

\bibitem[\protect\citeauthoryear{Brbi{\'c} \bgroup \em et al.\egroup }{2020}]{MARS}
Maria Brbi{\'c}, Marinka Zitnik, Sheng Wang, Angela~O Pisco, Russ~B Altman, Spyros Darmanis, and Jure Leskovec.
\newblock Mars: discovering novel cell types across heterogeneous single-cell experiments.
\newblock {\em Nature Methods}, 17(12):1200--1206, 2020.

\bibitem[\protect\citeauthoryear{Cai \bgroup \em et al.\egroup }{2022}]{EDESC}
Jinyu Cai, Jicong Fan, Wenzhong Guo, Shiping Wang, Yunhe Zhang, and Zhao Zhang.
\newblock Efficient deep embedded subspace clustering.
\newblock In {\em Proceedings of the IEEE/CVF Conference on Computer Vision and Pattern Recognition}, pages 1--10, 2022.

\bibitem[\protect\citeauthoryear{Chen \bgroup \em et al.\egroup }{2020}]{chen2020integrating}
Liang Chen, Yuyao Zhai, Qiuyan He, Weinan Wang, and Minghua Deng.
\newblock Integrating deep supervised, self-supervised and unsupervised learning for single-cell rna-seq clustering and annotation.
\newblock {\em Genes}, 11(7):792, 2020.

\bibitem[\protect\citeauthoryear{Cheng \bgroup \em et al.\egroup }{2023}]{cheng2023unsupervised}
Yue Cheng, Yanchi Su, Zhuohan Yu, Yanchun Liang, Ka-Chun Wong, and Xiangtao Li.
\newblock Unsupervised deep embedded fusion representation of single-cell transcriptomics.
\newblock In {\em Proceedings of the AAAI Conference on Artificial Intelligence}, volume~37, pages 5036--5044, 2023.

\bibitem[\protect\citeauthoryear{Cusanovich \bgroup \em et al.\egroup }{2018}]{Brain}
Darren~A Cusanovich, Andrew~J Hill, Delasa Aghamirzaie, Riza~M Daza, Hannah~A Pliner, Joel~B Berletch, Galina~N Filippova, Xingfan Huang, Lena Christiansen, William~S DeWitt, et~al.
\newblock A single-cell atlas of in vivo mammalian chromatin accessibility.
\newblock {\em Cell}, 174(5):1309--1324, 2018.

\bibitem[\protect\citeauthoryear{De~Kanter \bgroup \em et al.\egroup }{2019}]{de2019chetah}
Jurrian~K De~Kanter, Philip Lijnzaad, Tito Candelli, Thanasis Margaritis, and Frank~CP Holstege.
\newblock Chetah: a selective, hierarchical cell type identification method for single-cell rna sequencing.
\newblock {\em Nucleic Acids Research}, 47(16):e95--e95, 2019.

\bibitem[\protect\citeauthoryear{Fern{\'a}ndez-Sa{\'u}co \bgroup \em et al.\egroup }{2019}]{scorethre}
Igr~Alex{\'a}nder Fern{\'a}ndez-Sa{\'u}co, Niusvel Acosta-Mendoza, Andr{\'e}s Gago-Alonso, and Edel~Bartolo Garc{\'\i}a-Reyes.
\newblock Computing anomaly score threshold with autoencoders pipeline.
\newblock In {\em Progress in Pattern Recognition, Image Analysis, Computer Vision, and Applications: 23rd Iberoamerican Congress, CIARP 2018, Madrid, Spain, November 19-22, 2018, Proceedings 23}, pages 237--244. Springer, 2019.

\bibitem[\protect\citeauthoryear{Gong \bgroup \em et al.\egroup }{2019}]{gong2019memorizing}
Dong Gong, Lingqiao Liu, Vuong Le, Budhaditya Saha, Moussa~Reda Mansour, Svetha Venkatesh, and Anton van~den Hengel.
\newblock Memorizing normality to detect anomaly: Memory-augmented deep autoencoder for unsupervised anomaly detection.
\newblock In {\em Proceedings of the IEEE/CVF International Conference on Computer Vision}, pages 1705--1714, 2019.

\bibitem[\protect\citeauthoryear{Gretton \bgroup \em et al.\egroup }{2012}]{gretton2012kernel}
Arthur Gretton, Karsten~M Borgwardt, Malte~J Rasch, Bernhard Sch{\"o}lkopf, and Alexander Smola.
\newblock A kernel two-sample test.
\newblock {\em The Journal of Machine Learning Research}, 13(1):723--773, 2012.

\bibitem[\protect\citeauthoryear{Gulrajani \bgroup \em et al.\egroup }{2017}]{wGANs}
Ishaan Gulrajani, Faruk Ahmed, Martin Arjovsky, Vincent Dumoulin, and Aaron~C Courville.
\newblock Improved training of wasserstein gans.
\newblock {\em Advances in Neural Information Processing Systems}, 30, 2017.

\bibitem[\protect\citeauthoryear{Haghverdi \bgroup \em et al.\egroup }{2018}]{MNN}
Laleh Haghverdi, Aaron~TL Lun, Michael~D Morgan, and John~C Marioni.
\newblock Batch effects in single-cell rna-sequencing data are corrected by matching mutual nearest neighbors.
\newblock {\em Nature Biotechnology}, 36(5):421--427, 2018.

\bibitem[\protect\citeauthoryear{Hao \bgroup \em et al.\egroup }{2023}]{Seurat}
Yuhan Hao, Tim Stuart, Madeline~H Kowalski, Saket Choudhary, Paul Hoffman, Austin Hartman, Avi Srivastava, Gesmira Molla, Shaista Madad, Carlos Fernandez-Granda, et~al.
\newblock Dictionary learning for integrative, multimodal and scalable single-cell analysis.
\newblock {\em Nature Biotechnology}, pages 1--12, 2023.

\bibitem[\protect\citeauthoryear{Hettich}{1999}]{KDD}
Seth Hettich.
\newblock The uci kdd archive.
\newblock {\em http://kdd. ics. uci. edu}, 1999.

\bibitem[\protect\citeauthoryear{Hoeffding}{1994}]{hoeffding1994probability}
Wassily Hoeffding.
\newblock Probability inequalities for sums of bounded random variables.
\newblock {\em The Collected Works of Wassily Hoeffding}, pages 409--426, 1994.

\bibitem[\protect\citeauthoryear{Holmström \bgroup \em et al.\egroup }{2022}]{batchco}
Susanna Holmström, Sampsa Hautaniemi, and Antti Häkkinen.
\newblock {POIBM: batch correction of heterogeneous RNA-seq datasets through latent sample matching}.
\newblock {\em Bioinformatics}, 38(9):2474--2480, 02 2022.

\bibitem[\protect\citeauthoryear{Hornung \bgroup \em et al.\egroup }{2016}]{hornung2016combining}
Roman Hornung, Anne-Laure Boulesteix, and David Causeur.
\newblock Combining location-and-scale batch effect adjustment with data cleaning by latent factor adjustment.
\newblock {\em BMC Bioinformatics}, 17:1--19, 2016.

\bibitem[\protect\citeauthoryear{Ji \bgroup \em et al.\egroup }{2020}]{CSCC}
Andrew~L Ji, Adam~J Rubin, Kim Thrane, Sizun Jiang, David~L Reynolds, Robin~M Meyers, Margaret~G Guo, Benson~M George, Annelie Mollbrink, Joseph Bergenstr{\aa}hle, et~al.
\newblock Multimodal analysis of composition and spatial architecture in human squamous cell carcinoma.
\newblock {\em Cell}, 182(2):497--514, 2020.

\bibitem[\protect\citeauthoryear{Kiselev \bgroup \em et al.\egroup }{2018}]{scmap}
Vladimir~Yu Kiselev, Andrew Yiu, and Martin Hemberg.
\newblock scmap: projection of single-cell rna-seq data across data sets.
\newblock {\em Nature Methods}, 15(5):359--362, 2018.

\bibitem[\protect\citeauthoryear{Li \bgroup \em et al.\egroup }{2020}]{DESC}
Xiangjie Li, Kui Wang, Yafei Lyu, Huize Pan, Jingxiao Zhang, Dwight Stambolian, Katalin Susztak, Muredach~P Reilly, Gang Hu, and Mingyao Li.
\newblock Deep learning enables accurate clustering with batch effect removal in single-cell rna-seq analysis.
\newblock {\em Nature Communications}, 11(1):2338, 2020.

\bibitem[\protect\citeauthoryear{Li \bgroup \em et al.\egroup }{2022}]{CAMLU}
Ziyi Li, Yizhuo Wang, Irene Ganan-Gomez, Simona Colla, and Kim-Anh Do.
\newblock A machine learning-based method for automatically identifying novel cells in annotating single-cell rna-seq data.
\newblock {\em Bioinformatics}, 38(21):4885--4892, 2022.

\bibitem[\protect\citeauthoryear{Liu \bgroup \em et al.\egroup }{2021}]{RCA}
Boyang Liu, Ding Wang, Kaixiang Lin, Pang-Ning Tan, and Jiayu Zhou.
\newblock Rca: A deep collaborative autoencoder approach for anomaly detection.
\newblock In {\em Proceedings of the Thirtieth International Joint Conference on Artificial Intelligence (IJCAI-21)}, 2021.

\bibitem[\protect\citeauthoryear{Macnair and Robinson}{2023}]{SampleQC}
Will Macnair and Mark Robinson.
\newblock Sampleqc: robust multivariate, multi-cell type, multi-sample quality control for single-cell data.
\newblock {\em Genome Biology}, 24(1):23, 2023.

\bibitem[\protect\citeauthoryear{Petegrosso \bgroup \em et al.\egroup }{2020}]{petegrosso2020machine}
Raphael Petegrosso, Zhuliu Li, and Rui Kuang.
\newblock Machine learning and statistical methods for clustering single-cell rna-sequencing data.
\newblock {\em Briefings in Bioinformatics}, 21(4):1209--1223, 2020.

\bibitem[\protect\citeauthoryear{Pouyan \bgroup \em et al.\egroup }{2016}]{flow}
Maziyar~Baran Pouyan, Vasu Jindal, Javad Birjandtalab, and Mehrdad Nourani.
\newblock Single and multi-subject clustering of flow cytometry data for cell-type identification and anomaly detection.
\newblock {\em BMC Medical Genomics}, 9(2):99--110, 2016.

\bibitem[\protect\citeauthoryear{Qiu \bgroup \em et al.\egroup }{2021}]{NeuTraL}
Chen Qiu, Timo Pfrommer, Marius Kloft, Stephan Mandt, and Maja Rudolph.
\newblock Neural transformation learning for deep anomaly detection beyond images.
\newblock In {\em International Conference on Machine Learning}, pages 8703--8714. PMLR, 2021.

\bibitem[\protect\citeauthoryear{Ruff \bgroup \em et al.\egroup }{2018}]{DeepSVDD18}
Lukas Ruff, Robert Vandermeulen, Nico Goernitz, Lucas Deecke, Shoaib~Ahmed Siddiqui, Alexander Binder, Emmanuel M{\"u}ller, and Marius Kloft.
\newblock Deep one-class classification.
\newblock In {\em International conference on Machine Learning}, pages 4393--4402. PMLR, 2018.

\bibitem[\protect\citeauthoryear{Satpathy \bgroup \em et al.\egroup }{2019}]{TME}
Ansuman~T Satpathy, Jeffrey~M Granja, Kathryn~E Yost, Yanyan Qi, Francesca Meschi, Geoffrey~P McDermott, Brett~N Olsen, Maxwell~R Mumbach, Sarah~E Pierce, M~Ryan Corces, et~al.
\newblock Massively parallel single-cell chromatin landscapes of human immune cell development and intratumoral t cell exhaustion.
\newblock {\em Nature Biotechnology}, 37(8):925--936, 2019.

\bibitem[\protect\citeauthoryear{Schlegl \bgroup \em et al.\egroup }{2017}]{AnoGAN}
Thomas Schlegl, Philipp Seeb{\"o}ck, Sebastian~M Waldstein, Ursula Schmidt-Erfurth, and Georg Langs.
\newblock Unsupervised anomaly detection with generative adversarial networks to guide marker discovery.
\newblock In {\em International Conference on Information Processing in Medical Imaging}, pages 146--157. Springer, 2017.

\bibitem[\protect\citeauthoryear{Shenkar and Wolf}{2021}]{ICL}
Tom Shenkar and Lior Wolf.
\newblock Anomaly detection for tabular data with internal contrastive learning.
\newblock In {\em International Conference on Learning Representations}, 2021.

\bibitem[\protect\citeauthoryear{Traag \bgroup \em et al.\egroup }{2019}]{leiden}
Vincent~A Traag, Ludo Waltman, and Nees~Jan Van~Eck.
\newblock From louvain to leiden: guaranteeing well-connected communities.
\newblock {\em Scientific Reports}, 9(1):5233, 2019.

\bibitem[\protect\citeauthoryear{Tu \bgroup \em et al.\egroup }{2021}]{DFCN}
Wenxuan Tu, Sihang Zhou, Xinwang Liu, Xifeng Guo, Zhiping Cai, En~Zhu, and Jieren Cheng.
\newblock Deep fusion clustering network.
\newblock In {\em Proceedings of the AAAI Conference on Artificial Intelligence}, volume~35, pages 9978--9987, 2021.

\bibitem[\protect\citeauthoryear{Xie \bgroup \em et al.\egroup }{2016}]{xie2016unsupervised}
Junyuan Xie, Ross Girshick, and Ali Farhadi.
\newblock Unsupervised deep embedding for clustering analysis.
\newblock In {\em International Conference on Machine Learning}, pages 478--487. PMLR, 2016.

\bibitem[\protect\citeauthoryear{Xu \bgroup \em et al.\egroup }{2023}]{SLAD}
Hongzuo Xu, Yijie Wang, Juhui Wei, Songlei Jian, Yizhou Li, and Ning Liu.
\newblock Fascinating supervisory signals and where to find them: Deep anomaly detection with scale learning.
\newblock In {\em Proceedings of the 40th International Conference on Machine Learning}, ICML'23. JMLR.org, 2023.

\bibitem[\protect\citeauthoryear{Yang \bgroup \em et al.\egroup }{2020}]{MNNno1}
Yuchen Yang, Gang Li, Huijun Qian, Kirk~C Wilhelmsen, Yin Shen, and Yun Li.
\newblock {SMNN: batch effect correction for single-cell RNA-seq data via supervised mutual nearest neighbor detection}.
\newblock {\em Briefings in Bioinformatics}, 22(3):bbaa097, 06 2020.

\bibitem[\protect\citeauthoryear{You \bgroup \em et al.\egroup }{2018}]{GraphRNN}
Jiaxuan You, Rex Ying, Xiang Ren, William Hamilton, and Jure Leskovec.
\newblock {G}raph{RNN}: Generating realistic graphs with deep auto-regressive models.
\newblock In Jennifer Dy and Andreas Krause, editors, {\em Proceedings of the 35th International Conference on Machine Learning}, volume~80 of {\em Proceedings of Machine Learning Research}, pages 5708--5717. PMLR, 10--15 Jul 2018.

\bibitem[\protect\citeauthoryear{Yu \bgroup \em et al.\egroup }{2022}]{scTAG}
Zhuohan Yu, Yifu Lu, Yunhe Wang, Fan Tang, Ka-Chun Wong, and Xiangtao Li.
\newblock Zinb-based graph embedding autoencoder for single-cell rna-seq interpretations.
\newblock In {\em Proceedings of the AAAI conference on Artificial Intelligence}, volume~36, pages 4671--4679, 2022.

\bibitem[\protect\citeauthoryear{Yu \bgroup \em et al.\egroup }{2023a}]{MNNno3}
Xiaokang Yu, Xinyi Xu, Jingxiao Zhang, and Xiangjie Li.
\newblock Batch alignment of single-cell transcriptomics data using deep metric learning.
\newblock {\em Nature Communications}, 14(1):960, 2023.

\bibitem[\protect\citeauthoryear{Yu \bgroup \em et al.\egroup }{2023b}]{Clunum}
Xiaokang Yu, Xinyi Xu, Jingxiao Zhang, and Xiangjie Li.
\newblock Batch alignment of single-cell transcriptomics data using deep metric learning.
\newblock {\em Nature Communications}, 14(1):960, 2023.

\bibitem[\protect\citeauthoryear{Zenati \bgroup \em et al.\egroup }{2018}]{ALAD}
Houssam Zenati, Manon Romain, Chuan-Sheng Foo, Bruno Lecouat, and Vijay Chandrasekhar.
\newblock Adversarially learned anomaly detection.
\newblock In {\em 2018 IEEE International Conference on Data Mining (ICDM)}, pages 727--736. IEEE, 2018.

\bibitem[\protect\citeauthoryear{Zhai \bgroup \em et al.\egroup }{2023a}]{scPOT}
Yuyao Zhai, Liang Chen, and Min Deng.
\newblock Realistic cell type annotation and discovery for single-cell rna-seq data.
\newblock In {\em International Joint Conference on Artificial Intelligence}, 2023.

\bibitem[\protect\citeauthoryear{Zhai \bgroup \em et al.\egroup }{2023b}]{scGAD}
Yuyao Zhai, Liang Chen, and Min Deng.
\newblock scgad: a new task and end-to-end framework for generalized cell type annotation and discovery.
\newblock {\em Briefings in Bioinformatics}, 2023.

\bibitem[\protect\citeauthoryear{Zhao \bgroup \em et al.\egroup }{2020}]{DoS}
Sicheng Zhao, Bo~Li, Pengfei Xu, and Kurt Keutzer.
\newblock Multi-source domain adaptation in the deep learning era: A systematic survey.
\newblock {\em arXiv preprint arXiv:2002.12169}, 2020.

\bibitem[\protect\citeauthoryear{Zheng \bgroup \em et al.\egroup }{2017}]{PBMC}
GXY Zheng, JM~Terry, P~Belgrader, P~Ryvkin, ZW~Bent, R~Wilson, SB~Ziraldo, TD~Wheeler, GP~McDermott, J~Zhu, et~al.
\newblock Massively parallel digital transcriptional profiling of single cells. nat commun 8: 14049.
\newblock {\em Data Set5. Putative master regulators in the trans-eQTL (expression quantitative trait loci) hotspots Figure S}, 1, 2017.

\bibitem[\protect\citeauthoryear{Zhu \bgroup \em et al.\egroup }{2017}]{cycleGAN}
Jun-Yan Zhu, Taesung Park, Phillip Isola, and Alexei~A Efros.
\newblock Unpaired image-to-image translation using cycle-consistent adversarial networks.
\newblock In {\em Proceedings of the IEEE international conference on computer vision}, pages 2223--2232, 2017.

\end{thebibliography}


\end{document}